\def\isarxiv{1} 
\theoremstyle{plain}
\newtheorem{theorem}{Theorem}[section]
\newtheorem{proposition}[theorem]{Proposition}
\newtheorem{lemma}[theorem]{Lemma}
\newtheorem{corollary}[theorem]{Corollary}
\theoremstyle{definition}
\newtheorem{definition}[theorem]{Definition}
\newtheorem{assumption}[theorem]{Assumption}
\theoremstyle{remark}
\newtheorem{remark}[theorem]{Remark}
\icmltitlerunning{Fundamental Limits of Prompt Tuning VAR: Universality, Capacity and Efficiency}
\definecolor{mydarkblue}{rgb}{0,0.08,0.45}
\theoremstyle{plain}
\newtheorem{theorem}{Theorem}[section]
\newtheorem{lemma}[theorem]{Lemma}
\newtheorem{definition}[theorem]{Definition}
\newtheorem{corollary}[theorem]{Corollary}
\newtheorem{assumption}[theorem]{Assumption}
\newtheorem{fact}[theorem]{Fact}
\newtheorem{remark}[theorem]{Remark}
\newcommand{\wh}{\widehat}
\newcommand{\N}{\mathcal{N}}
\newcommand{\R}{\mathbb{R}}
\renewcommand{\d}{\mathrm{d}}
\newcommand{\VAR}{\mathrm{VAR}}
\newcommand{\X}{\mathsf{X}}
\newcommand{\Y}{\mathsf{Y}}
\newcommand{\Z}{\mathsf{Z}}
\newcommand{\F}{\mathsf{F}}
\newcommand{\V}{\mathsf{V}}
\DeclareMathOperator*{\E}{{\mathbb{E}}}
\DeclareMathOperator{\diag}{diag}
\newcommand*{\RN}[1]{\expandafter\@slowromancap\romannumeral #1@}
\begin{document}

\ifdefined\isarxiv

\date{}

\title{Universal Approximation of Visual Autoregressive Transformers}
\author{
Yifang Chen\thanks{\texttt{
yifangc@uchicago.edu}. The University of Chicago.}
\and
Xiaoyu Li\thanks{\texttt{
xiaoyu.li2@student.unsw.edu.au}. University of New South Wales.}
\and
Yingyu Liang\thanks{\texttt{
yingyul@hku.hk}. The University of Hong Kong. \texttt{
yliang@cs.wisc.edu}. University of Wisconsin-Madison.} 
\and
Zhenmei Shi\thanks{\texttt{
zhmeishi@cs.wisc.edu}. University of Wisconsin-Madison.}
\and 
Zhao Song\thanks{\texttt{ magic.linuxkde@gmail.com}. The Simons Institute for the Theory of Computing at UC Berkeley.}
}

\else

\twocolumn[
\icmltitle{Universal Approximation of Visual Autoregressive Transformers}



\icmlsetsymbol{equal}{*}

\begin{icmlauthorlist}
\icmlauthor{Firstname1 Lastname1}{equal,yyy}
\icmlauthor{Firstname2 Lastname2}{equal,yyy,comp}
\icmlauthor{Firstname3 Lastname3}{comp}
\icmlauthor{Firstname4 Lastname4}{sch}
\icmlauthor{Firstname5 Lastname5}{yyy}
\icmlauthor{Firstname6 Lastname6}{sch,yyy,comp}
\icmlauthor{Firstname7 Lastname7}{comp}
\icmlauthor{Firstname8 Lastname8}{sch}
\icmlauthor{Firstname8 Lastname8}{yyy,comp}
\end{icmlauthorlist}

\icmlaffiliation{yyy}{Department of XXX, University of YYY, Location, Country}
\icmlaffiliation{comp}{Company Name, Location, Country}
\icmlaffiliation{sch}{School of ZZZ, Institute of WWW, Location, Country}

\icmlcorrespondingauthor{Firstname1 Lastname1}{first1.last1@xxx.edu}
\icmlcorrespondingauthor{Firstname2 Lastname2}{first2.last2@www.uk}

\icmlkeywords{Machine Learning, ICML}

\vskip 0.3in
]



\printAffiliationsAndNotice{\icmlEqualContribution} 

\fi

\ifdefined\isarxiv
\begin{titlepage}
  \maketitle
  \begin{abstract}
We investigate the fundamental limits of transformer-based foundation models, extending our analysis to include Visual Autoregressive (VAR) transformers. VAR represents a big step toward generating images using a novel, scalable, coarse-to-fine ``next-scale prediction'' framework. These models set a new quality bar, outperforming all previous methods, including Diffusion Transformers, while having state-of-the-art performance for image synthesis tasks. Our primary contributions establish that, for single-head VAR transformers with a single self-attention layer and single interpolation layer, the VAR Transformer is universal. From the statistical perspective, we prove that such simple VAR transformers are universal approximators for any image-to-image Lipschitz functions. Furthermore, we demonstrate that flow-based autoregressive transformers inherit similar approximation capabilities. Our results provide important design principles for effective and computationally efficient VAR Transformer strategies that can be used to extend their utility to more sophisticated VAR models in image generation and other related areas.

  \end{abstract}
  \thispagestyle{empty}
\end{titlepage}

{\hypersetup{linkcolor=black}
\tableofcontents
}
\newpage

\else

\begin{abstract}

\end{abstract}

\fi

\section{Introduction}
Transformer-based architectures have reshaped the landscape of modern machine learning, demonstrating state-of-the-art performance across a wide range of tasks, including natural language processing (e.g., GPT-o3~\cite{gpto1}, Llama 3.3~\cite{llama3_arxiv,llama3_blog}, and Claude 3.5~\cite{claude3_pdf}), computer vision, and generative modeling. Their core mechanism of self-attention~\cite{vsp+17} allows for effective modeling of long-range dependencies in data, positioning transformers as a cornerstone of contemporary deep learning research. 
One particularly compelling variant is the Visual AutoRegressive (VAR) Transformer~\cite{tjy+24}, which adapts the transformer paradigm to structured image synthesis. By employing a coarse-to-fine ``next-scale prediction'' approach, VAR Transformers produce high-quality images more efficiently than many standard diffusion-based methods~\cite{sme20}. This iterative, pyramid-like generation process has demonstrated strong performance on large-scale visual tasks, indicating that multi-scale attention can capture hierarchical features in an image. Yet, despite promising empirical evidence, the theoretical underpinnings of VAR Transformers—specifically, whether they inherit the well-established universal approximation properties of classical transformers—remain an open question.

In parallel to VAR Transformers, flow-based generative methods (e.g., real-valued non-volume preserving (RealNVP) and Glow) have also garnered attention for their ability to generate high-fidelity samples in an invertible and tractable manner. Recent efforts have integrated autoregressive decompositions with flow-based designs, giving rise to Flow AutoRegressive (FlowAR) \cite{ryh+24} architectures. These models aim to blend the interpretability and stability of normalizing flows with the powerful representation learning of autoregressive transformers, potentially yielding more robust and scalable training dynamics. However, despite promising practical results, the theoretical investigation into the representational power of FlowAR remains similarly sparse.

This paper addresses two central gaps in our theoretical understanding of generative transformer architectures:
\begin{enumerate}
    \item {\bf Universality of VAR Transformers:} Although classic transformers are known to approximate arbitrary sequence-to-sequence functions~\cite{ks24,kkm22,ybr+20,hwg+24}, the additional pyramid up-sampling layers in VAR Transformers modify the input-output structure in ways that have not been theoretically dissected. We aim to rigorously determine whether VAR Transformers can still achieve universal approximation while relying on multi-scale token representations.
    \item {\bf Universality of FlowAR:} Normalizing-flow-inspired architectures equipped with autoregressive attention mechanisms promise both efficient sampling and tractable likelihood estimates. Yet, their approximation capabilities have not been formally established. Can a FlowAR model approximate arbitrary continuous transformations with any desired precision?
\end{enumerate}

In bridging these gaps, we seek to provide a unified view of how up-sampling, attention, and flow-based operations interact within transformer architectures to yield expressive function classes. Our contributions can be described as follows.
\begin{itemize}
    \item {\bf Universality of single-layer, single-head VAR Transformers (see Theorem~\ref{thm:var_universality}).} We establish that even minimal VAR Transformer designs can approximate any Lipschitz sequence-to-sequence function arbitrarily closely, extending classical universality results to the VAR setting. Our theoretical analysis demonstrates that the coarse-to-fine up-sampling process, in concert with self-attention, confers enough expressive power to realize complex transformations.
    
    \item {\bf Universality of FlowAR (see Corollary~\ref{corollary:flowar_universality}).} We further show that flow-based autoregressive transformers inherit similar approximation capabilities. In particular, we prove that FlowAR models can capture arbitrary Lipschitz transformations, illustrating how flow-based transformations and autoregressive attention can be combined without sacrificing the universality guarantees traditionally associated with transformers.
\end{itemize}

Our proofs highlight the complementary roles of self-attention, up-sampling layers, and flow-based transformations. We elucidate how these components—when arranged in a pyramid fashion (for VAR) or in an invertible mapping (for FlowAR)—enable powerful function approximation. These insights suggest a pathway for designing more flexible and efficient generative models that balance computational constraints with representational breadth.

By elucidating the theoretical foundations of both VAR Transformers and FlowAR, we advance a deeper understanding of why these models perform so effectively in practice. Our results reinforce that efficiency and expressiveness need not be at odds: even with seemingly minimal configurations (e.g., single-layer, single-head), these architectures can approximate highly complex functions. Moreover, our formal proofs set a stage for further explorations into the trade-offs between model depth, head multiplicity, and approximation efficiency, as well as studying domain-specific constraints in images, text, or other structured data formats.

\paragraph{Roadmap.} 
In Section~\ref{sec:related_work}, we survey related work on transformer universality and generative architectures. Section~\ref{sec:prelim} introduces the necessary background on VAR Transformers, focusing on up-interpolation layers and the structure of the VAR Transformer block. Section~\ref{sec:context_map} analyzes the contextual mapping properties of attention in VAR Transformers. Sections~\ref{sec:var_mainresult} present our main universality theorems, demonstrating that even simple VAR Transformers approximate Lipschitz functions with arbitrary accuracy. Section~\ref{sec:flowar_universality} provides universality results for FlowAR, a related autoregressive variant. Finally, Section~\ref{sec:conclusion} concludes with a summary of our findings and reflections on future directions. 
\section{Related Work}\label{sec:related_work}

\paragraph{AutoRegressive Models.} AutoRegressive models for visual generation \cite{dyh+21,dzht22} process 2D images by converting them into 1D token sequences. Early approaches, such as PixelCNN \cite{vke+16} and PixelSNAIL \cite{cmr+18}, introduced pixel-by-pixel image generation using a raster-scan order. Later advancements \cite{rvav19,ero21,lkk+22} adapted this idea to generate image tokens following a similar raster sequence. For instance, VQ-GAN \cite{ero21} utilizes a decoder-only transformer akin to GPT-2 for generating images, while VQVAE-2 \cite{rvav19} and RQ-Transformer \cite{lkk+22} enhance the method by incorporating hierarchical scales or stacked representations. Recently, Visual AutoRegressive ($\VAR$) modeling \cite{tjy+24} proposed an innovative coarse-to-fine ``next-scale prediction'' strategy, significantly improving scalability, inference speed, and image quality, thus surpassing conventional autoregressive models and diffusion transformers.

\paragraph{Diffusion Models.} Diffusion models \cite{hja20,rbl+22} excel in generating high-resolution images by iteratively refining noise into coherent visuals. Prominent examples, such as DiT \cite{px23} and U-ViT \cite{bnx+23}, leverage probabilistic frameworks to learn data distributions effectively. Recent progress in diffusion-based image generation has focused on enhancing sampling techniques and training efficiency \cite{se19,sme20,lzb+22,hwl+24,cgl+25_high,ssz+25_dit}, advancing latent-space learning \cite{rbl+22,wsd+24,wxz+24,lzw+24}, refining model architectures \cite{hsc+22,px23,lsss24,wcz+23,xsg+24}, and exploring applications in 3D generation \cite{pjbm22,wlw+24,xlc+24,cgl+25_text}.

\paragraph{Universality of Transformers.}
The universality of transformers refers to their capacity to function as universal approximators, meaning they can theoretically model any sequence-to-sequence function with arbitrary precision. \cite{ybr+20} establish this property by demonstrating that transformers achieve universal approximation through the stacking of multiple layers of feed-forward and self-attention functions. Taking a different approach, \cite{jl23} confirms transformer universality by leveraging the Kolmogorov-Albert representation theorem. Additionally, \cite{adtk23} extend the universality results to architectures featuring non-standard attention mechanisms. More recently, \cite{ks24} shows that even a transformer with a single self-attention layer can act as a universal approximator. Of independent interest, \cite{hl24} explore the generalization and approximation properties of transformers under assumptions of Hölder smoothness and low-dimensional subspaces. \cite{lll+25_loop} showed that the looped transformer can approximate the Hypergraphs algorithm.

\section{Preliminary}\label{sec:prelim}
In this section, we introduce the fundamental definitions of our work. 
In Section~\ref{sec:notations}, we introduce all related math notations used in this paper.
In Section~\ref{sec:var_phase1}, we introduce the components in phase one of the VAR Model. 
In Section~\ref{sec:var_transformer}, we mathematically detail the VAR Transformer blocks. 

\subsection{Notations}\label{sec:notations}

We denote the $\ell_p$ norm of a vector $x$ by $\| x \|_p$, i.e., $\|x\|_1 := \sum_{i=1}^n |x_i|$, $\| x \|_2 := (\sum_{i=1}^n x_i^2)^{1/2}$ and $\| x \|_{\infty} := \max_{i \in [n]} |x_i|$. For a vector $x \in \R^n$, $\exp(x) \in \R^n$ denotes a vector where $\exp(x)_i$ is $\exp(x_i)$ for all $i \in [n]$. For $n > k$, for any matrix $A \in \R^{n\times k}$, we denote the spectral norm of $A$ by $\| A \|$, i.e., $\| A \| := \sup_{x\in \R^k} \| Ax \|_2 / \| x \|_2$. We define the function norm as $\| f \|_\alpha :=  (\int \| f(X) \|_\alpha^\alpha \d X)^{1/\alpha}$ where $f$ is a function. For a matrix $X \in \R^{n_1 n_2 \times d}$, we use $\X \in \R^{n_1 \times n_2 \times d}$ to denote its tensorization, and we only assume this for letters $X$ and $Y$.

\subsection{VAR Phase One}\label{sec:var_phase1} \label{sec:phase_1}

We first present Phase One of VAR model based on~\cite{kll+25}.

The VAR model uses the VAR Transformer to convert the initialized token map $X_{\mathrm{init}}$ into a series of pyramid-shaped token maps. 
The VAR Transformer alternates between up sample blocks and attention layers to get the output.

\paragraph{Up Sample Blocks.} 
The $k$-th up sample block takes as input the initial token map $X_{\mathrm{init}}$ and the previous pyramid-shaped token maps $X_1, \ldots, X_k$, sets $Y_1 = X_{\mathrm{init}}$ and up samples each $X_i$ into a new token map $Y_{i+1}$, and outputs the new pyramid-shaped token maps $Y_1, \ldots, Y_{k+1}$.  

The upsampling on each token map $X_r (r \in [k])$ uses interpolation with a bicubic spline kernel.

\begin{definition}[Bicubic Spline Kernel, Definition 3.1 from~\cite{kll+25} on Page 7 ]\label{def:bi_spline_kernel}
    A bicubic spline kernel is a piecewise cubic function $W: \R \to \R$ that satisfies $W(x) \in [0,1]$ for all $x \in \R$.
\end{definition}

\begin{definition}[Up-interpolation Layer for One-Step Geometric Series]\label{def:up_inter_layer_one_step}
The Up-Interpolation layer is defined as follows:
\begin{itemize}
    \item Let $r \geq 2$ be an integer.
    \item Let $h_{r-1} < h_{r}$ denote two positive integers
    \item Let $w_{r-1} < w_{r}$ denote two positive integers.
    
    \item Let $d \in \mathbb{N}$ denote the number of channels.
    \item Let $\X \in \R^{h_{r-1} \times w_{r-1} \times d}$ denote the input feature map.
    \item Let $\mathsf{Y} \in \R^{h_{r} \times w_{r} \times d}$ denote the output feature map.
    \item Let $s,t \in \{-1,0,1,2\}$.
    \item Let $W: \R \to \R$ be a bicubic spline kernel as defined in~\ref{def:bi_spline_kernel}.
    
    We use $\phi_{{\rm up},r}: \R^{ h_{r-1} \times w_{r-1} \times c} \to \R^{h_{r} \times w_{r} \times c}$ to denote the up-interpolation operation then we have $\mathsf{Y} = \phi_{{\rm up}, r}(X)$. Specifically, for $i \in [h_{r}], j \in [w_{r}], l \in [c]$, we have
    \begin{align*}
        \mathsf{Y}_{i,j,l} := \sum_{s=-1}^2 \sum_{t=-1}^2 W(s) \cdot \mathsf{X}_{\frac{i \cdot h_{r-1}}{h_{r} }+s,\frac{j \cdot w_{r-1}}{w_{r}}+t,l} \cdot  W(t)
    \end{align*}
    
\end{itemize}
\end{definition}

After defining the Up-Interpolation Layer for a one-step geometric sequence, we can construct a Pyramid Up-Interpolation Layer, which applies multiple up-interpolation layers to generate token maps at different resolutions. Specifically, we can describe this Pyramid Up-Interpolation Layer through the following definition:

\begin{definition}[Pyramid Up-Interpolation Layer $\Phi_{{\rm}}$, $r=1$ Case]\label{def:pyramid_up_interpolation_layer_r1}
The Pyramid Up-Interpolation layer is defined as follows:
\begin{itemize}
    \item Let $d > 0$ denote one positive integer.
    \item  Let $\X_{\mathrm{init}} \in \R^{1\times 1 \times d}$ denote the initial token map.
\end{itemize}
We use $\Phi_{{\rm up}, 1} : \R^{ 1 \times 1 \times d} \to \R^{1 \times 1 \times d}$ such that
\begin{itemize}
    \item $\Phi_{\mathrm{up}, 1}(\X_{\mathrm{init}}) = \X_{\mathrm{init}}$.
\end{itemize}
\end{definition}

\begin{definition}[Pyramid Up-Interpolation Layer $\Phi_{{\rm}}$, $r \geq 2$ Case]\label{def:pyramid_up_interpolation_layer}
The Pyramid Up-Interpolation layer is defined as follows:
\begin{itemize}
    \item Let $d > 0$ denote one positive integer.
    \item Let $r \geq 2$.
    \item Let $\phi_{{\rm up},r}: \R^{ h_{r-1} \times w_{r-1} \times d} \to \R^{h_{r} \times w_{r} \times d}$ be defined in Definition~\ref{def:up_inter_layer_one_step}.
    \item  Let $\X_{\mathrm{init}} \in \R^{1\times 1 \times d}$ denote the initial token map.
\end{itemize}
We use $\Phi_{{\rm up}, r} : \R^{ h_{[r-1]} \times w_{[r-1]} \times d} \to \R^{h_{[r]} \times w_{[r]} \times d}$ such that
\begin{itemize}
    \item For all the $i \in [r] \setminus \{1\}$, we set $\mathsf{Y}_{i} = \phi_{{\rm up}, {i-1}} ( \X_{i-1} )$ (Here $\mathsf{Y}_i$ is the $i$-th layer of $Y$)
    \item For $i = 1$, we set $\mathsf{Y}_1 = \X_{\mathrm{init}}$
\end{itemize}
\end{definition}

\begin{figure}[!ht]\label{fig:iteration_trajectory}
    \centering
    \includegraphics[width=0.6\linewidth]{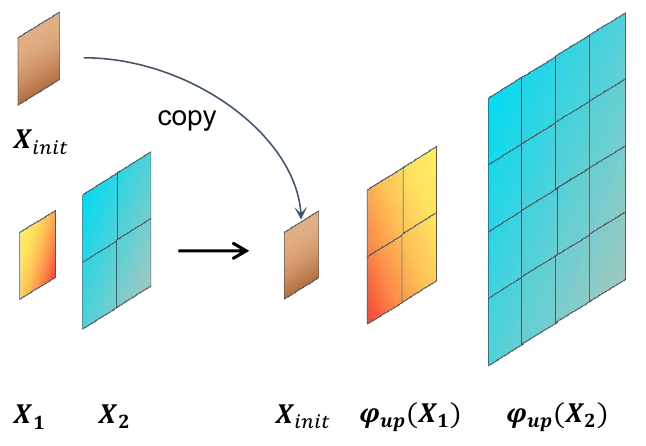}
    \caption{ One Pyramid Up-Interpolation Layer Instance $\Phi_{{\rm up},2}$, From Figure 1 in~\cite{kll+25}.}
    \label{fig:trajectory}
\end{figure}

\begin{remark}
    We have a pyramid-shaped token maps of size $ h_{[r+1]} \times w_{[r+1]} \times d $. To input this into the $\VAR$ Transformer, we merge the first two dimensions, transforming it into an input of size $ (\sum_{i=1}^{r+1} h_i w_i) \times d $. 
\end{remark}

Now, we are ready to introduce the VAR transformer. 
\begin{definition}[$\VAR$ Transformer]\label{def:var_transformer}
We define $\VAR$ transformer as follows:
\begin{itemize}
    \item Assume the $\VAR$ transformer has $m$ transformer layers.
    \item At the $i$-th transformer layer, let $g_i$ denote components excluding the attention layer, such as the LN layer or MLP layer.
    
    \item Let $\Phi_{\mathrm{up},r}$ denote the pyramid up-interpolation layer defined in Definition~\ref{def:pyramid_up_interpolation_layer}.
    \item  Let $\mathsf{Attn}_i$ stand for the self-attention layer, which is defined in Definition~\ref{def:single_layer_transformer}.
    \item Let $\X_{\mathrm{init}} \in \R^{1 \times 1 \times d}$ be an input token map and $X_{\mathrm{init}} \in \R^{1 \times d}$ be its matrix version.
    \item Let $n = \sum_{i=1}^m h_i w_i$.
\end{itemize}
     We define a $\VAR$ transformer as the following
\begin{align*}
    \mathsf{TF}(\X_{ \mathrm{init} }) := 
    & ~ g_m \circ \mathsf{Attn}_m \circ \Phi_{ \mathrm{up}, m} \circ \cdots \circ g_2 \circ \mathsf{Attn}_2 \circ \Phi_{ \mathrm{up}, 2} \\
    & ~ \circ g_1 \circ \mathsf{Attn}_1 \circ \Phi_{ \mathrm{up}, 1} (\X_{ \mathrm{init} }) \in \R^{n \times d},
\end{align*}
In this expression, $\circ$ stands for functional composition.
\end{definition}

\subsection{VAR Transformer Blocks}\label{sec:var_transformer}

Recall we have defined $\phi_{\mathrm{up}}: \R^{h \times w \times c} \to \R^{h' \times w' \times c}$ in Definition~\ref{def:up_inter_layer_one_step}. Since there is no non-linear operation in $\phi_{\mathrm{up}}$, $\phi_{\mathrm{up}}$ is equivalent to a matrix multiplication operation, where the dimension of the matrix is $\R^{h'w' \times hw}$. For simplicity, we view $\phi_{\mathrm{up}}$ as a $\R^{h'w' \times hw}$ dimension matrix in the following proofs.

\begin{remark} [Applying $\phi_{\mathrm{up}}$ on $X \in \R^{n \times d}$, Remark 4.8 from~\cite{kll+25} on Page 8]
The actual input of VAR Transformer Layer are $r$ input token maps, $X_1 \in \R^{h_1 \times w_1 \times d}, \ldots, X_r \in \R^{h_r \times w_r \times d}$. We denote them as $X \in \R^{n \times d}$, where $n := \sum_{i = 1}^r h_i w_i$. We denote $\phi_{\mathrm{up}}(X) \in \R^{n' \times d}$ as applying $\phi_{\mathrm{up}}$ to each $X_i \in \R^{h_i \times w_i \times d}$ for $i \in [r]$, where $n' = \sum_{i=1}^r h_i' w_i'$. 
\end{remark}

Then, we can combine multiple attention layers with other components (up-interpolation layers, multilayer perceptron layers, layer-wise normalization layers) to create a complete VAR Transformer architecture.

\begin{definition}[Single VAR Transformer Layer, Definition 4.9 from~\cite{kll+25} on Page 9]\label{def:var_transformer_single_layer}
We define a VAR transformer block as the following.
\begin{itemize}
    \item Assume the VAR transformer has $m$ Transformer layers.
    \item Let $\mathsf{FFN}$ denotes a single Feed-forward Layer (see Definition~\ref{def:ffn}).
    \item Let $\mathsf{Attn}$ stands for a single self-attention layer (see Definition~\ref{def:single_layer_transformer}).
\end{itemize}
\begin{align*}
    \mathsf{TF}_{\mathrm{var}}(X) = \mathsf{FFN} \circ \mathsf{Attn} \circ \phi_{\rm up} \in \R^{n \times d},
\end{align*}

In this expression, $\circ$ stands for functional composition.
\end{definition}

Now, we present the VAR Transformer Network Function Class.

\begin{definition}[VAR Transformer Network Function Class]\label{def:var_function_class} We define VAR Transformer Network Function Class as follows.
    \begin{itemize}
        \item Assume the VAR transformer network has $m$ layers.
        \item for $i \in [m]$, $\mathsf{FFN}_i$ denotes the Feed-forward at $i$-th layer (see Definition~\ref{def:ffn}), $\mathsf{Attn}_i$ denotes the Attention at $i$-th layer (see Definition~\ref{def:single_layer_transformer}), and $\phi^{\mathrm{up}}_i$ denotes the Up interpolation at $i$-th layer (see Definition~\ref{def:up_inter_layer_one_step}).
        \item Let $\mathcal{T}^{a,s,c}$ denote the VAR transformer network function class
        \item each function $\tau \in \mathcal{T}^{a,s,c}$ consists of VAR transformer blocks $\mathsf{TF}_{\mathrm{var}}^m$ with $a$ heads of size $s$ and $c$ MLP hidden neurons
    \end{itemize}
    \begin{align*}
        \mathcal{T}^{a,s,c} := 
        & ~ \{ \tau: \R^{n\times d} \to \R^{n\times d} | \tau = \mathsf{TF}_{\mathrm{var}}^m \circ \mathsf{TF}_{\mathrm{var}}^{m-1} \circ \hdots \circ \mathsf{TF}_{\mathrm{var}}^1 (X)
        \} 
    \end{align*}
\end{definition}

\section{Any-Rank Single-Layer Attention is a Contextual Mapping Function}
\label{sec:context_map}
In this section, we show that Attention is a contextual mapping function. 
In Section~\ref{sec:cont_map}, we give the definition of contextual mapping.
In Section~\ref{sec:any_rank_attn_contmap}, we introduce any-rank single-layer attention as a contextual mapping function.

\subsection{Contextual Mapping}\label{sec:cont_map}

{\bf Contextual Mapping.}
Let $X, Y \in \R^{n\times d}$ be the input embeddings and output label sequences, respectively. 
Let $X_{i} \in \R^{d}$ be the $i$-th token of each $X$ embedding sequence. 

\begin{definition}[Vocabulary, Definition 2.4 from~\cite{hwg+24} on Page 8]\label{def:vocab}
    We define the vocabulary.
    \begin{itemize}
        \item We define the $i$-th vocabulary set for $i \in [N]$ by $\mathcal{V}^{(i)}=\cup_{k \in[n]} X_k^{(i)} \subset \R^{d}$.
        \item We define the whole vocabulary set $\mathcal{V}$ as $\mathcal{V}=\cup_{i \in[N]} \mathcal{V}^{(i)} \subset \R^{d}$.
    \end{itemize}
\end{definition}
Note that while ``vocabulary'' typically refers to the tokens' codomain, here, it refers to the set of all tokens within a single sequence.
To facilitate our analysis, 
we introduce the idea of input token separation following~\cite{ks24,kkm22,ybr+20}.

\begin{definition}[Tokenwise Separateness, Definition 2.5 from~\cite{hwg+24} on Page 8] \label{def:token_seperate_new}
    We define the tokenwise separateness as follows.
    \begin{itemize}
        \item Let $X^{(1)}, \hdots, X^{(N)} \in \R^{n\times d}$ be embeddings. 
        \item Let $N$ be the number of sequences in the datasets.
        \item Let $n$ be the length of a sequence. i.e. $X^{(i)} \in \R^{n\times d}$
    \end{itemize}
    
    First, we state three conditions for $X^{(1)}, \hdots, X^{(N)}$ 
    \begin{itemize}
        \item [(i)] For any $i \in[N]$ and $k \in[n],\| X_k^{(i)}\|_2 > \gamma_{\min }$ holds. 
        \item [(ii)]
        For any $i \in[N]$ and $k \in[n], \|X_k^{(i)}\|_2<\gamma_{\max }$ holds.
        \item [(iii)]
        For any $i, j \in [N]$ and $k, l \in [n]$ if $X_k^{(i)} \neq X_ l^{(j)}, $ then $ \|X_k^{(i)}- X_l^{(j)}\|_2 > \delta$ holds.
    \end{itemize}
    Second, we define three types of separateness as follows, 
    \begin{itemize}
        \item {\bf Part 1.} If all conditions hold, then we call it  tokenwise $(\gamma_{\min }, \gamma_{\max }, \delta)$-separated
        \item {\bf Part 2.} If conditions (ii) and (iii) hold, then we denote this as $(\gamma, \delta)$-separateness.
        \item {\bf Part 3.} If only condition (iii) holds, then we denote it as $(\delta)$-separateness.
    \end{itemize} 
\end{definition}
To clarify condition (iii), we consider cases where there are repeated tokens between different input sequences. 
Next, we define contextual mapping. 
Contextual mapping describes a function's ability to capture the context of each input sequence as a whole and assign a unique ID to each input sequence.

\begin{definition} [$(\gamma,\delta)$-Contextual Mapping, Definition 2.6 from~\cite{hwg+24} on Page 8] \label{def:contextual_mapping_new}
A function $q:\R^{n\times d} \to \R^{n\times d}$ is said to be a $(\gamma, \delta)$-contextual mapping for a set of embeddings  $X^{(1)}, \hdots, X^{(N)} \in \R^{n\times d}$,  if the following conditions hold:
    \begin{itemize}
        \item {\bf Contextual Sensitivity $\gamma$.}
        For any $i \in[N]$ and $k \in[n],  \|q (X^{(i)})_k\|_2 < \gamma$ holds.
    
        \item {\bf Approximation Error $\delta$.}
        For any $i, j \in[N]$ and $k, l \in[n]$ such that $\mathcal{V}^{(i)} \neq \mathcal{V}^{(j)}$ or $X_k^{(i)} \neq X_l^{(j)}$, $\|q(X^{(i)})_k-q(X^{(j)})_l\|_2 > \delta$ holds.
    \end{itemize}
In addition, Note that $q (X^{(i)})$ for $i \in[N]$ is called a context ID of $X^{(i)}$.    
\end{definition}

\subsection{Any-Rank Single-Layer Attention is a Contextual Mapping Function}\label{sec:any_rank_attn_contmap}

Now we present the result showing that a softmax-based $1$-head, $1$-layer attention block with any-rank weight matrices is a contextual mapping.

\begin{lemma}[Any-Rank Attention as a $(\gamma, \delta)$-Contextual Mapping, Lemma 2.2 from~\cite{hwg+24} on Page 9]
\label{lem:contextual_map_self_attn_new}

    If the following conditions hold:
    \begin{itemize}
        \item Let $X^{(1)}, \hdots, X^{(N)} \in \R^{n \times d}$ be embeddings that are $(\gamma_{\min}, \gamma_{\max}, \epsilon)$-tokenwise separated, with the vocabulary set $\mathcal{V} = \cup_{i \in [N]} \mathcal{V}^{(i)} \subset \R^{d}$.
        \item $X_k^{(i)} \neq X_l^{(i)}$ for any $i \in [N]$ and $k, l \in [L]$.
        \item Let $
        \gamma = \gamma_{\max} + \frac{\epsilon}{4}$
        \item Let $
        \delta = \exp(-5 \epsilon^{-1} |{\cal V}|^4 d \kappa \gamma_{\max} \log L )$
        \item Let $\kappa := \gamma_{\max}/\gamma_{\min}$.
        \item Let $W^{(O)} \in \R^{d \times s}$ and $W_V, W_K, W_Q \in \R^{s \times d}$.
    \end{itemize}

    Then, we can show
    \begin{itemize}
        \item 1-layer, single-head attention mechanism serves as a $(\gamma, \delta)$-contextual mapping for the embeddings $X^{(1)}, \hdots, X^{(N)}$ with weight matrices $W^{(O)}$ and $W_V, W_K, W_Q$. 
    \end{itemize}
    
\end{lemma}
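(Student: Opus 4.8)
The plan is to adapt the standard softmax-attention-as-contextual-mapping argument (as in \cite{ks24,kkm22,ybr+20,hwg+24}) to the any-rank weight setting, where $W_Q, W_K, W_V$ factor through an $s$-dimensional bottleneck and $W^{(O)}$ maps back to $\R^d$. The key observation is that the rank constraint does not obstruct the construction: we only need to realize a single effective bilinear form $W_Q^\top W_K \in \R^{d\times d}$ and a single effective value map $W^{(O)} W_V \in \R^{d\times d}$, and both of these can be arbitrary matrices of rank at most $s$; the construction from the full-rank case already uses weight matrices whose relevant products are low-rank (essentially rank one, built from a carefully chosen direction $u \in \R^d$), so everything goes through with $s \geq 1$.

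First I would set up the effective parametrization: write $A := W_Q^\top W_K$ for the score matrix and $B := W^{(O)} W_V$ for the value aggregation matrix, so that the attention output at token $k$ of sequence $X^{(i)}$ is $\mathrm{Attn}(X^{(i)})_k = X_k^{(i)} + \sum_{l} \mathrm{softmax}_l( (X^{(i)} A (X_k^{(i)})^\top)_l ) \cdot B X_l^{(i)}$ (with whatever residual/scaling convention the earlier definitions fix). Then I would choose a unit vector $u \in \R^d$ that separates all tokens in the vocabulary: since the $X_k^{(i)}$ take finitely many values in $\R^d$ and are $\epsilon$-separated, a generic $u$ gives distinct inner products $\langle u, X_k^{(i)} \rangle$, and by a volume/pigeonhole argument one can quantify a gap of order $\epsilon \cdot |\mathcal V|^{-O(1)} d^{-O(1)}$ between consecutive values. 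Set $A = \lambda\, u u^\top$ for a large temperature parameter $\lambda$ and $B = \mu\, u e^\top$ or similar (with $e$ a suitable readout direction). With $\lambda = \Theta(\epsilon^{-1} |\mathcal V|^4 d \kappa \gamma_{\max} \log L)$ chosen as in the statement of $\delta$, the softmax becomes sharply peaked on the token of maximal $\langle u, X_l^{(i)}\rangle$ within the sequence, so the attention output essentially appends to each token a quantity determined by (a) that maximal value — which is a context ID for the whole sequence — and (b) the token's own value via the residual.

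The middle steps are then: (1) bound the contextual sensitivity, i.e., $\|\mathrm{Attn}(X^{(i)})_k\|_2 < \gamma = \gamma_{\max} + \epsilon/4$, which follows because the added term is a convex combination of bounded vectors scaled by a small constant (choosing $\mu$ so the perturbation is at most $\epsilon/4$); (2) establish the approximation-error lower bound $\delta$: if either the vocabularies differ ($\mathcal V^{(i)} \neq \mathcal V^{(j)}$) or the tokens differ ($X_k^{(i)} \neq X_l^{(j)}$), then the pair $(\text{max context value},\ \text{own token value})$ differs, and the sharp-softmax estimate converts this discrete difference into a quantitative separation of order $\delta$ after accounting for the error in approximating the hardmax by the softmax — this is where the exponentially small $\delta$ and the precise form $\exp(-5\epsilon^{-1}|\mathcal V|^4 d \kappa \gamma_{\max}\log L)$ come from. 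The hypothesis that $X_k^{(i)} \neq X_l^{(i)}$ for $i\in[N]$, $k,l\in[L]$ (tokens within a sequence are distinct) is used to guarantee the argmax over $l$ is unique, so the peaked softmax is well-defined.

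The main obstacle I expect is the quantitative softmax-concentration bookkeeping in step (2): one must carefully track how the gap in $\langle u, \cdot\rangle$ values (which is polynomially small in $|\mathcal V|, d, \epsilon$), the temperature $\lambda$, and the sequence length $L$ combine so that the post-attention context IDs are separated by at least $\delta$ while the sensitivity stays below $\gamma$ — the two requirements pull $\lambda$ in opposite directions, and reconciling them is exactly what forces $\delta$ to be exponentially small. A secondary technical point is verifying that the any-rank factorization genuinely imposes no loss: one must exhibit explicit $W^{(O)}, W_V, W_K, W_Q$ of the prescribed shapes (with $W^{(O)} \in \R^{d\times s}$, $W_V, W_K, W_Q \in \R^{s\times d}$) realizing the chosen rank-one $A$ and $B$, which is immediate for $s \geq 1$ but should be stated. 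Beyond these, the argument is a faithful port of the known single-layer-attention contextual-mapping proofs, so I would lean on their lemmas (token separation $\Rightarrow$ existence of separating direction, and sharp-softmax $\Rightarrow$ near-hardmax) rather than re-derive them.
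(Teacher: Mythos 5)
The paper does not actually prove this lemma: it is imported verbatim as Lemma~2.2 of \cite{hwg+24}, so your sketch has to be measured against that source's argument (which follows \cite{ks24}). Your overall scaffolding matches it — rank-one effective matrices $W_Q^\top W_K$ and $W^{(O)}W_V$ built from a separating direction $u$ (so the rank-$s$ factorization is indeed harmless for $s\ge 1$), a large temperature making the softmax nearly a hardmax, and the two opposing requirements (sensitivity $\gamma=\gamma_{\max}+\epsilon/4$ versus separation $\delta$) forcing $\delta$ to be exponentially small.

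However, your key separation step contains a genuine gap. You claim that if $\mathcal V^{(i)}\neq\mathcal V^{(j)}$ or $X_k^{(i)}\neq X_l^{(j)}$ then the pair (maximal projection value of the sequence, own token value) differs, and that this pair serves as the context ID. That is false: two sequences can have different vocabularies yet share the same maximizing token under $\langle u,\cdot\rangle$ (e.g.\ $\mathcal V^{(i)}=\{a,b\}$, $\mathcal V^{(j)}=\{a,c\}$ with $a$ maximal in both) and the same query token, in which case your pair coincides; under an exact hardmax your construction would then produce identical outputs, violating the required $\delta$-separation exactly in the case the contextual-mapping definition is designed to handle (identical tokens in different contexts). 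In the actual proof the separation does not come from the argmax value at all, but from the fact that the softmax (Boltzmann) weights depend on \emph{every} token of the sequence: the exponentially small weights assigned to the non-maximal tokens differ whenever the vocabularies differ, and $\delta=\exp(-5\epsilon^{-1}|\mathcal V|^4 d\kappa\gamma_{\max}\log L)$ is precisely a quantitative lower bound on that residual signal — it is the useful part of the softmax tail, not merely a bookkeeping error from approximating hardmax by softmax. So the missing idea is this injectivity-of-the-Boltzmann-average argument; without it, step (2) of your plan fails. A minor additional point: the attention layer in this paper's Definition~\ref{def:single_layer_transformer} has no residual term, so anchoring the token's own identity ``via the residual'' relies on a skip-connection convention taken from \cite{hwg+24} rather than from the definition the lemma cites; that should be stated explicitly rather than assumed.
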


Lemma~\ref{lem:contextual_map_self_attn_new} indicates that any-rank self-attention function distinguishes input tokens $X_k^{(i)}=X_l^{(j)}$ such that $\mathcal{V}^{(i)} \neq \mathcal{V}^{(j)}$.
In other words, 
it distinguishes two identical tokens within a different context.
\section{Universality of VAR Transformer}\label{sec:var_mainresult}

In this section, we present our proof for the universality of the VAR Transformer.
In Section~\ref{sec:universality_2}, we used a universality result from a previous work.
In Section~\ref{sec:two_layer_pertu}, we analyze how the error behaves when two consecutive layers in our composition are each replaced by their respective approximations.
In Section~\ref{sec:pertu_recursive_one}, we present the scenario when one of the composited layers got replaced by a different function.
In Section~\ref{sec:pertu_recursive_all}, we present the scenario when all of the composited layers got replaced. 
In Section~\ref{sec:var_universality}, we present our proof for the universality of the VAR Transformer.

\subsection{Universality of \texorpdfstring{$\mathcal{T}_A^{1,1,4}$}{} with \texorpdfstring{$O((1/ \epsilon)^{d 
n})$}{} FFN Layers}
\label{sec:universality_2}

We used a universality result from~\cite{hwg+24}.

\begin{lemma}[$\tau \in \mathcal{T}^{1,1,4}_A$ Transformer is Universal Seq2Seq Approximator, Theorem 2.3 in~\cite{hwg+24} on Page 11]
\label{lem:PT_uni_multi_layer_FF2}
    If the following conditions hold: 
    \begin{itemize}
        \item Let $1 \leq p < \infty$ and $\epsilon > 0$.
        \item Let a transformer with one self-attention layer defined as $\tau \in \mathcal{T}^{1,1,4}_A$
    \end{itemize}

    Then, there exists
    \begin{itemize}
        \item a transformer $\tau$ with single self-attention layer, such that for any $\mathcal{L} \in \mathcal{F}_{C}$ there exists
    $ \| \tau ( \cdot), \mathcal{L}\|_\alpha \leq \epsilon$.
    \end{itemize}
\end{lemma}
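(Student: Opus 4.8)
The plan is to follow the classical three-stage ``quantize $\to$ contextualize $\to$ look up'' recipe for transformer universality (as in \cite{ybr+20,ks24}), arranged so that the \emph{only} self-attention layer is spent on the contextualization stage and the (exponentially many) feed-forward layers realize the lookup table. Throughout, $\mathcal{F}_C$ is taken to be the class of Lipschitz sequence-to-sequence maps supported on a fixed compact cube, and distances are measured in the function norm $\|\cdot\|_\alpha$ from Section~\ref{sec:notations}.

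First I would reduce to approximating piecewise-constant targets. Given $\mathcal{L}\in\mathcal{F}_C$ and $\epsilon>0$, choose a grid width $\delta=\delta(\epsilon)$, let $G_\delta$ be the cubic grid covering the domain, and let $\bar{\mathcal{L}}$ be constant on each cell of $G_\delta$, equal there to the value of $\mathcal{L}$ at the cell's corner; Lipschitzness gives $\|\mathcal{L}-\bar{\mathcal{L}}\|_\alpha\le\epsilon/3$ for $\delta$ small. A first block of feed-forward layers implements an approximate coordinatewise quantizer that snaps each input token to its nearest grid point, composed with a fixed positional shift so that tokens at distinct positions land in disjoint regions of $\R^d$; this map is exact away from a thin slab around the cell boundaries, whose $\alpha$-measure contributes at most $\epsilon/3$ to the total error. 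After this block the effective input ranges over a finite family $X^{(1)},\dots,X^{(N)}$ of shifted grid-point sequences with $N=O((1/\delta)^{dn})=O((1/\epsilon)^{dn})$, and by the positional shift this family is tokenwise $(\gamma_{\min},\gamma_{\max},\delta')$-separated with $X^{(i)}_k\neq X^{(i)}_l$ for $k\neq l$, exactly the hypotheses of Lemma~\ref{lem:contextual_map_self_attn_new}.

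Next I would apply Lemma~\ref{lem:contextual_map_self_attn_new}: the single any-rank, single-head self-attention layer is a $(\gamma,\widehat\delta)$-contextual mapping on $\{X^{(1)},\dots,X^{(N)}\}$, assigning to each (sequence, position) pair a context ID with bounded coordinates that is $\widehat\delta$-separated across distinct contexts, where $\widehat\delta=\exp(-\poly(|\mathcal{V}|,d,1/\epsilon))$. The final stage is a value-mapping block: a stack of $O(N)=O((1/\epsilon)^{dn})$ feed-forward layers, each a sharp ReLU bump (buildable with the allowed $4$ hidden neurons) tuned to one admissible context-ID value; when the current token's ID matches, that layer adds the desired quantized output $\bar{\mathcal{L}}(X^{(i)})_k$ for the corresponding cell and position, and otherwise acts as the identity. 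Since the IDs are $\widehat\delta$-separated, the bumps can be made narrow enough to avoid cross-talk, so the composition realizes the exact lookup $X^{(i)}_k\mapsto\bar{\mathcal{L}}(X^{(i)})_k$ up to error $\epsilon/3$; the triangle inequality over the three sources and a rescaling of $\epsilon$ then gives $\|\tau(\cdot)-\mathcal{L}(\cdot)\|_\alpha\le\epsilon$ with $\tau\in\mathcal{T}^{1,1,4}_A$.

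The main obstacle is the value-mapping stage: one must verify that width-$4$ feed-forward layers genuinely suffice to produce indicator-like bumps sharp enough to resolve context IDs only $\widehat\delta$ apart, and that composing $O((1/\epsilon)^{dn})$ of them incurs no accumulating interference — each layer must be provably the identity outside its target bump, and the residual stream must be laid out (via extra coordinates and the positional shift) so that ``detect the ID'' and ``write the answer'' never collide. The remaining technical bookkeeping is tracking the interplay of $\delta$ (grid width), $\delta'$ (post-quantization token separation), and $\widehat\delta$ (contextual-mapping separation), and checking that the boundary slab has uniformly small $\alpha$-measure for all $1\le\alpha<\infty$.
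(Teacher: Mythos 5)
The paper never proves this lemma---it is imported verbatim as Theorem~2.3 of \cite{hwg+24} and used as a black box---so the only meaningful comparison is with that source's argument, and your proposal reconstructs essentially that same route: feed-forward quantization of the input onto a grid with a positional shift to ensure tokenwise separation, the single softmax attention head acting as a $(\gamma,\delta)$-contextual mapping via Lemma~\ref{lem:contextual_map_self_attn_new}, and a stack of $O((1/\epsilon)^{dn})$ width-$4$ feed-forward layers realizing the lookup, exactly matching the section heading here. The caveats you flag (that four hidden neurons suffice for the indicator bumps resolving $\widehat{\delta}$-separated context IDs without cross-talk) are the technical steps carried out in \cite{ks24,hwg+24} rather than gaps in the approach itself.
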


\subsection{Two Layers Perturbation}\label{sec:two_layer_pertu}

In this section, we analyze how the error behaves when two consecutive layers in our composition are each replaced by their respective approximations. Specifically, we consider the composition $f_i \circ g_i$ and replace $g_i$ with an up interpolation function $\Phi_{\mathrm{up},i}$ and $f_i$ with a one-layer transformer $\tau_i$. We show that under appropriate Lipschitz and approximation assumptions, the overall error of the approximated two-layer composition can be controlled in terms of the individual approximation errors.

\begin{assumption}[Target Function Class]\label{as:function_class}
    We assume the following things:
    \begin{itemize}
        \item Let $f_1, \ldots, f_r$ be $r$ $K$-Lipschitz functions from $\R^{h_r \times w_r \times d}$ to $\R^{h_r \times w_r \times d}$.
        \item For each $i \in [r]$, let $g_i$ be a $K$-Lipschitz function from $\R^{h_{i-1} \times w_{i-1} \times d}$ to $\R^{h_i \times w_i \times d}$.
        \item We assume that for each $i \in [r]$, $g_i$ can be approximated by some up interpolation function $\phi_{\mathrm{up},i}$.
        \item We assume that the target function $f_{\mathrm{word2img}}:\R^{1 \times 1 \times d} \to \R^{h_r \times w_r \times d}$ satisfies
    \begin{align*}
        f_{\mathrm{word2img}} := f_r \circ g_r \cdots \circ f_1 \circ g_1.
    \end{align*}
    \end{itemize}
\end{assumption}

With the Assumption~\ref{as:function_class}, we present the two layers of perturbation as follows. 

\begin{lemma}[Two Layers Perturbation]
    Let $\phi_{{\rm up},i}$ be the up interpolation function defined in~\ref{def:up_inter_layer_one_step}. Let $f_r$ be $r$ $K$-Lipschitz functions from Assumption~\ref{as:function_class}. Let $g_i$ be $r$ $K$-Lipschitz functions from Assumption~\ref{as:function_class}. Let $\tau_i$ be the one-layer transformer defined in Eq. 2.4 from~\cite{hwg+24}. If the following conditions hold:
    \begin{itemize}
        \item $\|g_i - \Phi_{\mathrm{up},i}\| \leq \epsilon_{1,i}$ from Assumption~\ref{as:function_class}.
        \item $\|f_i - \tau_i\| \leq \epsilon_{2,i}$ from Theorem~\ref{lem:PT_uni_multi_layer_FF2}.
        \item $f_i$ is $K_{1,i}$-Lipschitz.
    \end{itemize}
    Then we have
    \begin{align*}
        \|f_i \circ g_i - \tau_i \circ \Phi_{\mathrm{up},i}\| \leq K_{1,i} \epsilon_{1,i} + \epsilon_{2,i}.
    \end{align*}
\end{lemma}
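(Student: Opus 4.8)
The plan is to reduce the claim to two applications of the triangle inequality together with the Lipschitz property of $f_i$. I would insert the hybrid map $f_i \circ \Phi_{\mathrm{up},i}$, which shares its outer factor with the target and its inner factor with the approximant, and write
\begin{align*}
    \|f_i \circ g_i - \tau_i \circ \Phi_{\mathrm{up},i}\|
    \leq \|f_i \circ g_i - f_i \circ \Phi_{\mathrm{up},i}\|
    + \|f_i \circ \Phi_{\mathrm{up},i} - \tau_i \circ \Phi_{\mathrm{up},i}\|.
\end{align*}
It then suffices to bound the first summand by $K_{1,i}\epsilon_{1,i}$ and the second by $\epsilon_{2,i}$.

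For the first summand, I would use that $f_i$ is $K_{1,i}$-Lipschitz pointwise: for every input token map $X$ we have $\|f_i(g_i(X)) - f_i(\Phi_{\mathrm{up},i}(X))\| \leq K_{1,i}\,\|g_i(X) - \Phi_{\mathrm{up},i}(X)\|$. Propagating this pointwise estimate through the function norm — raising to the $\alpha$-th power, integrating over $X$, and taking the $\alpha$-th root (or simply taking a supremum, in the sup-norm reading) — yields $\|f_i \circ g_i - f_i \circ \Phi_{\mathrm{up},i}\| \leq K_{1,i}\,\|g_i - \Phi_{\mathrm{up},i}\| \leq K_{1,i}\epsilon_{1,i}$, invoking the hypothesis $\|g_i - \Phi_{\mathrm{up},i}\| \leq \epsilon_{1,i}$ from Assumption~\ref{as:function_class}.

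For the second summand, the key observation is that precomposition with $\Phi_{\mathrm{up},i}$ only evaluates $f_i - \tau_i$ on the image of $\Phi_{\mathrm{up},i}$, which lies inside the domain on which the approximation guarantee $\|f_i - \tau_i\| \leq \epsilon_{2,i}$ (from Lemma~\ref{lem:PT_uni_multi_layer_FF2}) is asserted. In the supremum reading this is immediate, since $\sup_X \|f_i(\Phi_{\mathrm{up},i}(X)) - \tau_i(\Phi_{\mathrm{up},i}(X))\| \leq \sup_Y \|f_i(Y) - \tau_i(Y)\| \leq \epsilon_{2,i}$. In the $L^\alpha$ reading one instead changes variables $Y = \Phi_{\mathrm{up},i}(X)$; because $\Phi_{\mathrm{up},i}$ is the fixed linear (matrix) map of Definition~\ref{def:up_inter_layer_one_step}, the resulting Jacobian factor is an absolute constant that can be absorbed, or equivalently one restricts to the relevant compact domain and renormalizes. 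Substituting the two bounds back into the triangle inequality gives $\|f_i \circ g_i - \tau_i \circ \Phi_{\mathrm{up},i}\| \leq K_{1,i}\epsilon_{1,i} + \epsilon_{2,i}$.

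The only genuinely delicate step is this last one: showing that precomposing with $\Phi_{\mathrm{up},i}$ is harmless under whichever norm is in force. Under a supremum norm it is trivial; under an $L^\alpha$ function norm it requires the easy but worth-stating remark that the up-interpolation matrix distorts the underlying measure by at most a fixed constant, so the transfer of the $\epsilon_{2,i}$ bound from $f_i - \tau_i$ to $(f_i - \tau_i)\circ \Phi_{\mathrm{up},i}$ costs nothing of consequence. Everything else is a routine triangle-inequality-plus-Lipschitz computation, and the argument is exactly the base case for the recursive perturbation analysis in Sections~\ref{sec:pertu_recursive_one}--\ref{sec:pertu_recursive_all}.
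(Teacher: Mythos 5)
Your proof is correct and takes essentially the same route as the paper's: insert the hybrid term $f_i \circ \Phi_{\mathrm{up},i}$, apply the triangle inequality, bound the first summand via the $K_{1,i}$-Lipschitz property of $f_i$ together with $\|g_i - \Phi_{\mathrm{up},i}\| \leq \epsilon_{1,i}$, and bound the second by discarding the precomposition with $\Phi_{\mathrm{up},i}$. If anything you are more careful than the paper, which justifies the first bound by writing $f_i \circ g_i - f_i \circ \Phi_{\mathrm{up},i} = f_i \circ (g_i - \Phi_{\mathrm{up},i})$ as ``basic algebra'' (only literally valid for linear $f_i$, whereas your pointwise Lipschitz argument is the right fix) and asserts $\|(f_i - \tau_i) \circ \Phi_{\mathrm{up},i}\| \leq \|f_i - \tau_i\|$ without comment --- precisely the precomposition subtlety you flag and address.
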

\begin{proof}
    We can show that
    \begin{align*}
         \| f_i \circ g_i -  \tau_i \circ \Phi_{\mathrm{up},i} \| 
         = &~ \|f_i \circ g_i- f_i \circ \Phi_{\mathrm{up},i} + f_i \circ \Phi_{\mathrm{up},i}- \tau_i \circ \Phi_{\mathrm{up},i}\| \\
         \leq &~ \|f_i \circ g_i- f_i \circ \Phi_{\mathrm{up},i}\| + \|f_i \circ \Phi_{\mathrm{up},i}- \tau_i \circ \Phi_{\mathrm{up},i}\| \\
         = &~ \|f_i \circ (g_i- \Phi_{\mathrm{up},i})\| + \|(f_i -\tau_i) \circ \Phi_{\mathrm{up},i}\| \\
         \leq &~ \|f_i \circ (g_i- \Phi_{\mathrm{up},i})\| + \|f_i -\tau_i \| \\
         \leq &~ K_{1,i} \epsilon_{1,i} + \epsilon_{2,i}
    \end{align*}
    where the first step follows from basic algebra, the second step follows from triangle inequality, the third and fourth steps follow from basic algebra, and the fifth step follows from our conditions.
\end{proof}

\subsection{Perturbation of Recursively Composting Functions that One Layer is Different}\label{sec:pertu_recursive_one}

In this section, we consider a scenario where we have a composition of many layers, but only one of the layers is replaced by a different function. This setting helps us see how a single local perturbation can propagate through subsequent layers in a multi-layer composition. The lemma below quantifies this propagation by leveraging Lipschitz continuity.

\begin{lemma}[Perturbation of Recursively Composting Functions, One Layer is Different]\label{lem:one_layer_perturbation}
if the following conditions hold
\begin{itemize}
    \item Assume $\|  u_j(w) - v_j(w) \| \leq \epsilon $ for any $w$.
    \item $v_i(x) \leq K_2 \cdot \| x \|$
\end{itemize}
    Fix $j$, we have 
    \begin{align*}
        \| \circ_{i=j+1}^{n+1} v_i  \circ_{i=1}^j u_i - \circ_{i=j}^n v_i  \circ_{i=0}^{j-1} u_i \| \leq K_2^{n-j} \cdot \epsilon
    \end{align*}
\end{lemma}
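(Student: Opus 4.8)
The plan is to prove the bound by induction on the number of layers that have been "swapped in" from $u$ to $v$, reducing the general statement to a single clean triangle-inequality step of the type already carried out in the Two Layers Perturbation lemma. Concretely, fix $j$ and observe that the two composite maps in the statement differ only in which single factor sits at position $j$: on the left the factors $u_1,\dots,u_j$ are applied first and then $v_{j+1},\dots,v_{n+1}$, while on the right the factors $u_0,\dots,u_{j-1}$ are applied first and then $v_j,\dots,v_n$ (so compared to the left-hand composite, the index window has shifted by one and the factor at the "seam" has been replaced). The key point is that both composites agree on the first $j-1$ input transformations (call that common prefix map $P := \circ_{i=1}^{j-1} u_i$, or the appropriately reindexed $\circ_{i=0}^{j-1}u_i$, which we treat as the same object), and agree on the last $n-j$ output transformations, so the only genuine discrepancy is the single factor $u_j$ versus $v_j$ sandwiched in the middle.

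First I would isolate that middle discrepancy: write both sides as $S \circ (\text{middle factor}) \circ P$, where $S := \circ_{i=j+1}^{n+1} v_i$ is a composition of $n-j+1$ maps — wait, more carefully, after aligning indices the suffix on the two sides is a composition of $n-j$ of the $v_i$'s, and $P$ is the shared prefix. Then
\begin{align*}
    \| \circ_{i=j+1}^{n+1} v_i  \circ_{i=1}^j u_i - \circ_{i=j}^n v_i  \circ_{i=0}^{j-1} u_i \|
    = \| S \circ u_j \circ P - S \circ v_j \circ P \|
    \leq \mathrm{Lip}(S) \cdot \| u_j \circ P - v_j \circ P \|,
\end{align*}
and since $\| u_j(P(x)) - v_j(P(x))\| \leq \epsilon$ for every argument by the first hypothesis, the inner term is at most $\epsilon$. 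It remains to bound $\mathrm{Lip}(S)$. Each $v_i$ satisfies $\|v_i(x)\| \leq K_2\|x\|$ (the stated growth condition); interpreting this in the intended Lipschitz sense — i.e. reading the hypothesis $v_i(x) \le K_2\|x\|$ as "$v_i$ is $K_2$-Lipschitz" exactly as the earlier lemmas do — the composition of the $n-j$ suffix maps is $K_2^{n-j}$-Lipschitz, giving the claimed $K_2^{n-j}\cdot\epsilon$.

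I expect the main obstacle to be bookkeeping rather than mathematical depth: one has to check that the index shift between $\circ_{i=j+1}^{n+1} v_i \circ_{i=1}^{j} u_i$ and $\circ_{i=j}^{n} v_i \circ_{i=0}^{j-1} u_i$ really does leave all but one factor matched up, so that the difference collapses to a single substitution $u_j \leftrightarrow v_j$ with a shared prefix and a shared (up to reindexing) suffix. There is also a minor subtlety in the statement itself: the condition "$v_i(x) \le K_2\|x\|$" is written as a growth/norm bound but is being used as a Lipschitz constant on the tail composition, and the index on $v$ in that hypothesis is written $v_i$ rather than $v_i$ for the relevant range — I would simply state explicitly that we use it as "each $v_i$ is $K_2$-Lipschitz for $i$ in the relevant range" and proceed. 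Once the alignment is pinned down, the estimate is exactly the three-line computation above, and no induction is even strictly necessary — the single-seam structure does all the work.
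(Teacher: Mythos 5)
Your proposal is correct and follows essentially the same route as the paper's proof: isolate the single seam $u_j$ versus $v_j$ after the shared prefix $w=\circ_{i=0}^{j-1}u_i(x)$, use the hypothesis $\|u_j(w)-v_j(w)\|\leq\epsilon$, and propagate through the $n-j$ remaining $v_i$'s to pick up the factor $K_2^{n-j}$. The only (minor) difference is how the hypothesis on $v_i$ is used to control that propagation: the paper pulls the difference inside the suffix by invoking linearity of the $v_i$'s and then applies the growth bound $\|v_i(x)\|\leq K_2\|x\|$, whereas you read the same condition as a $K_2$-Lipschitz bound on each $v_i$ -- both readings patch the same implicit gap in the stated hypotheses and yield the identical estimate.
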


\begin{proof}
We define $u$
\begin{align*}
    w = \circ_{i=0}^{j-1} u_i (x)
\end{align*}

    We can show that for any $x$
    \begin{align*}
        \| \circ_{i=j+1}^{n+1} v_i  \circ_{i=1}^j u_i (x) - \circ_{i=j}^n v_i  \circ_{i=0}^{j-1} u_i (x) \| 
        = & ~  \| \circ_{i=j+1}^{n+1} v_i  u_j(w) - \circ_{i=j+1}^n v_i  ( v_j(w) ) \| \\
        = & ~ \| \circ_{i=j+1}^{n+1} v_i ( u_j(w) - v_j(w)  ) \| \\
        \leq & ~ K_2^{n-j} \cdot \epsilon
    \end{align*}
    where the first step follows from basic algebra, the second step follows from linearity, and the third step follows from lemma assumptions.
\end{proof}

\subsection{Perturbation of Recursively Composting Functions that All Layer are Different}\label{sec:pertu_recursive_all}

In this section, we extend the analysis to the most general scenario in which all layers in the composition are replaced by different functions. This captures the situation where each layer $u_i$ is approximated by some other function $v_i$. We derive a cumulative bound that sums the individual perturbations introduced at each layer.

\begin{lemma}[Perturbation of Recursively Compositing Functions, All Layers are Different]
    If the following conditions hold:
    \begin{itemize}
        \item Let $\circ_{i=1}^n u_i  = u_n \circ \cdots \circ u_1$
        \item Let $\circ_{i=1}$
        \item Let $u_0(x) = x$ which is identity mapping
        \item Let $v_{n+1}(x) = x$ which is identity mapping
    \end{itemize}
    Then 
    \begin{align*}
        \| \circ_{i=1}^n u_i - \circ_{i=1}^n v_i \| \leq\sum_{j=1}^{n} \| \circ_{i=j+1}^{n+1} v_i  \circ_{i=1}^j u_i - \circ_{i=j}^n v_i  \circ_{i=0}^{j-1} u_i \|
    \end{align*}
\end{lemma}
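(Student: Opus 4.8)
The plan is to prove the telescoping inequality by expressing the difference $\circ_{i=1}^n u_i - \circ_{i=1}^n v_i$ as a sum of hybrid terms, where in each successive term one more $u_i$ is swapped out for $v_i$, and then bounding the whole difference by the triangle inequality. Concretely, for $j = 0, 1, \ldots, n$, define the hybrid map
\begin{align*}
    H_j := v_n \circ v_{n-1} \circ \cdots \circ v_{j+1} \circ u_j \circ u_{j-1} \circ \cdots \circ u_1,
\end{align*}
so that $H_n = \circ_{i=1}^n u_i$ (all $u$'s) and $H_0 = \circ_{i=1}^n v_i$ (all $v$'s), using the conventions $u_0 = \mathrm{id}$ and $v_{n+1} = \mathrm{id}$ to make the boundary cases $j=0$ and $j=n$ typecheck with the notation $\circ_{i=j+1}^{n+1} v_i \circ_{i=1}^j u_i$ and $\circ_{i=j}^{n} v_i \circ_{i=0}^{j-1} u_i$ that appears in the statement.

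First I would observe that $H_j = \circ_{i=j+1}^{n+1} v_i \circ_{i=1}^{j} u_i$ and $H_{j-1} = \circ_{i=j}^{n} v_i \circ_{i=0}^{j-1} u_i$ exactly match the two maps in the $j$-th summand on the right-hand side of the claimed inequality, once one unwinds the index conventions (the $v_{n+1} = \mathrm{id}$ at the top of the first composition and the $u_0 = \mathrm{id}$ at the bottom of the second are precisely the padding needed). Then I would write the telescoping identity
\begin{align*}
    \circ_{i=1}^n u_i - \circ_{i=1}^n v_i = H_n - H_0 = \sum_{j=1}^{n} (H_j - H_{j-1}),
\end{align*}
and apply the triangle inequality for the function norm $\|\cdot\|$ to get $\|H_n - H_0\| \leq \sum_{j=1}^n \|H_j - H_{j-1}\|$, which is exactly the desired bound. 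This is the entire argument: it is a purely formal telescoping plus one application of the triangle inequality, with no Lipschitz constants or approximation errors entering yet (those are consumed later, presumably by combining this lemma with Lemma~\ref{lem:one_layer_perturbation}).

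The main obstacle — really the only thing requiring care — is bookkeeping the index conventions so that the boundary summands ($j=1$ and $j=n$) are written consistently with the interior ones; in particular one must verify that $H_j$ and $H_{j-1}$ genuinely agree except for the single slot $j$ (where $H_j$ uses $u_j$ and $H_{j-1}$ uses $v_j$), so that the partial sums telescope cleanly. One should also note that the triangle inequality is being applied to the function norm $\|f\|_\alpha = (\int \|f(X)\|_\alpha^\alpha \, \mathrm{d}X)^{1/\alpha}$ (or whatever ambient norm $\|\cdot\|$ denotes here), which is a genuine norm for $1 \le \alpha < \infty$, so subadditivity holds; no compositional or Lipschitz structure of the $u_i, v_i$ is needed for this particular lemma. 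I would close by remarking that each summand $\|H_j - H_{j-1}\|$ is then handled by the earlier one-layer perturbation lemma, giving the full cumulative bound $\sum_j K_2^{n-j}\epsilon_j$ when the pieces are assembled downstream.
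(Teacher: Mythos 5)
Your proposal is correct and follows essentially the same route as the paper's proof: a telescoping decomposition into hybrid compositions (one extra layer swapped at each step) followed by a single application of the triangle inequality. Your write-up is simply a more explicit version of the paper's terse "adding intermediate terms" step, with the boundary conventions $u_0 = \mathrm{id}$, $v_{n+1} = \mathrm{id}$ spelled out.
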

\begin{proof}
    We can show
    \begin{align*}
        \| \circ_{i=1}^n u_i - \circ_{i=1}^n v_i \| 
        = & ~ \| \sum_{j=1}^{n} ( \circ_{i=j+1}^{n+1} v_i  \circ_{i=1}^j u_i - \circ_{i=j}^n v_i  \circ_{i=0}^{j-1} u_i ) \| \\
        \leq & ~  
        \sum_{j=1}^{n} \| \circ_{i=j+1}^{n+1} v_i  \circ_{i=1}^j u_i - \circ_{i=j}^n v_i  \circ_{i=0}^{j-1} u_i \|
    \end{align*}
    where the first step follows from adding intermediate terms, and the last step follows from the triangle inequality.
    
Thus, we complete the proof.
\end{proof}

\subsection{The Universality of VAR Transformer}\label{sec:var_universality}

In this section, with the established error bounds for replacing individual or multiple layers with alternative functions, we now prove the main universality result for the VAR Transformer. In essence, we show that a properly constructed VAR Transformer can approximate the target function $f_\mathrm{word2img}$ (from Assumption~\ref{as:function_class}) with arbitrarily small errors under suitable Lipschitz and approximation assumptions on each layer.

\begin{theorem}[Universality of VAR Transformer]\label{thm:var_universality}
    Assume $K_2 > 2$.
    For $f_\mathrm{word2img}$ satisfies Assumption~\ref{as:function_class}, there exists a VAR Transformer $\tau_{\mathsf{VAR}}$ such that
    \begin{align*}
        \|\tau_{\mathsf{VAR}} - f_\mathrm{word2img} \| \leq K_2^n( K_{1,i} \epsilon_{1,i} + \epsilon_{2,i} ).
    \end{align*}
\end{theorem}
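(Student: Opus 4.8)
The plan is to combine the three perturbation lemmas of Section~\ref{sec:pertu_recursive_all} and Section~\ref{sec:pertu_recursive_one} with the single-layer universality result Lemma~\ref{lem:PT_uni_multi_layer_FF2} and the two-layer perturbation bound. The target $f_{\mathrm{word2img}} = f_r \circ g_r \circ \cdots \circ f_1 \circ g_1$ is a composition of $2r$ maps; the candidate VAR Transformer is $\tau_{\mathsf{VAR}} = \tau_r \circ \Phi_{\mathrm{up},r} \circ \cdots \circ \tau_1 \circ \Phi_{\mathrm{up},1}$, obtained by replacing each $g_i$ with an up-interpolation layer $\Phi_{\mathrm{up},i}$ (possible by Assumption~\ref{as:function_class}) and each $f_i$ with a one-self-attention-layer transformer $\tau_i$ (possible by Lemma~\ref{lem:PT_uni_multi_layer_FF2}, applied to the Lipschitz class $\mathcal{F}_C$ containing $f_i$).

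First I would set up the bookkeeping so that the $2r$ factors of $f_{\mathrm{word2img}}$ play the role of the $u_i$ in the ``All Layers are Different'' lemma, and the $2r$ factors of $\tau_{\mathsf{VAR}}$ play the role of the $v_i$, with $n = 2r$. That lemma gives
\begin{align*}
    \| \tau_{\mathsf{VAR}} - f_{\mathrm{word2img}} \| \leq \sum_{j=1}^{n} \| \circ_{i=j+1}^{n+1} v_i \circ_{i=1}^j u_i - \circ_{i=j}^n v_i \circ_{i=0}^{j-1} u_i \|.
\end{align*}
Next, each summand is controlled by the ``One Layer is Different'' lemma (Lemma~\ref{lem:one_layer_perturbation}): the $j$-th term measures the effect of swapping exactly the $j$-th layer $u_j$ for $v_j$, and it is bounded by $K_2^{n-j}$ times the local approximation error $\|u_j - v_j\|$, where $K_2$ is a uniform Lipschitz/growth constant for the $v_i$'s. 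For the layers where $u_j = g_i$ is swapped for $v_j = \Phi_{\mathrm{up},i}$, the local error is $\epsilon_{1,i}$ possibly amplified by the downstream Lipschitz factor as in the two-layer perturbation lemma, giving a contribution of order $K_{1,i}\epsilon_{1,i}$; for the layers where $u_j = f_i$ is swapped for $v_j = \tau_i$, the local error is $\epsilon_{2,i}$. Summing the geometric weights $K_2^{n-j}$ over all $j$ and bounding each local error by $\max_i(K_{1,i}\epsilon_{1,i} + \epsilon_{2,i})$, the assumption $K_2 > 2$ makes the geometric sum $\sum_{j} K_2^{n-j} \le 2 K_2^{n}$ (or simply absorb the constant) so that the whole bound collapses to $K_2^n (K_{1,i}\epsilon_{1,i} + \epsilon_{2,i})$ as stated.

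The main obstacle I anticipate is matching the norms and the function-class hypotheses at the interfaces between layers: Lemma~\ref{lem:PT_uni_multi_layer_FF2} is stated for $\|\cdot\|_\alpha$ approximation over $\mathcal{F}_C$, and to invoke it for each $f_i$ one must check that $f_i$ (composed with whatever up-interpolation precedes it) lands in the admissible class with the right domain/codomain dimensions after the ``merge the first two dimensions'' reshaping from the Remark following Definition~\ref{def:pyramid_up_interpolation_layer}; one also needs the Lipschitz constants $K_{1,i}$ of $f_i$ and the growth constant $K_2$ of the $\tau_i$'s to be genuinely uniform (or to absorb their product into the final constant), which is where the precise choice of the approximating transformers matters. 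A secondary point is verifying that the linear up-interpolation maps $\Phi_{\mathrm{up},i}$ are themselves Lipschitz with a controlled constant — this follows since $\Phi_{\mathrm{up},i}$ is matrix multiplication and $W(\cdot)\in[0,1]$ by Definition~\ref{def:bi_spline_kernel} — so that they can legitimately take the role of the well-behaved $v_i$ in the recursive perturbation lemmas. Once these interface conditions are pinned down, the error propagation is a routine geometric-series estimate.
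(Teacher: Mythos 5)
Your proposal is correct and follows essentially the same route as the paper: decompose the difference via the ``all layers are different'' lemma, bound each summand by $K_2^{n-j}$ times a local error using Lemma~\ref{lem:one_layer_perturbation}, control each local error by $K_{1,i}\epsilon_{1,i}+\epsilon_{2,i}$ via the two-layer perturbation bound together with Lemma~\ref{lem:PT_uni_multi_layer_FF2}, and sum the geometric series, which under $K_2>2$ equals $\frac{K_2^n-1}{K_2-1}\le K_2^n$ exactly (so no factor-$2$ absorption is even needed). The only cosmetic difference is that you index the $2r$ individual factors separately while the paper works at the level of the $r$ blocks $f_i\circ g_i$ versus $\tau_i\circ\Phi_{\mathrm{up},i}$; the interface caveats you raise (norm compatibility and uniformity of the constants) are real but are glossed over in the paper as well.
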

\begin{proof}
    We can show that
    \begin{align*}
        \|\tau_{\mathsf{VAR}} - f_\mathrm{word2img} \|  = &~ \| \circ_{i=1}^r (f_i \circ g_i) - \circ_{i=1}^r (\tau_i \circ \Phi_{\mathrm{up},i}) \| \\
        = &~ \sum_{j=1}^n K_2^{n-j} ( K_{1,i} \epsilon_{1,i} + \epsilon_{2,i} )\\
        = &~ \frac{K_2^n - 1}{K_2 - 1}( K_{1,i} \epsilon_{1,i} + \epsilon_{2,i} ) \\
        \leq &~ K_2^n( K_{1,i} \epsilon_{1,i} + \epsilon_{2,i} ) .
    \end{align*}
    where the first step follows from Definition~\ref{def:var_function_class}, the second step follows from Lemma~\ref{lem:one_layer_perturbation}, the third step follows from basic algebra, and the fourth step follows from the basic inequality.  
\end{proof}

\section{Universality of FlowAR}\label{sec:flowar_universality}

In this section, we show that the universality results established for the VAR Transformer can be extended to our FlowAR model. The key observation is that the same local perturbation bounds and Lipschitz assumptions used in the VAR Transformer setting also apply to FlowAR, with only minor changes. Specifically, each FlowAR layer $\Phi_{\mathrm{down}, i}$ can be analyzed in an analogous way to $\Phi_{\mathrm{up}, i}$, allowing us to derive a bound on the overall error of the composed FlowAR model.

\begin{corollary}
    Let $\phi_{{\rm down},i}$ be the down interpolation function of FlowAR (see Definition~\ref{def:down_sample_function}). Let $f_r$ be $r$ $K$-Lipschitz functions from Assumption~\ref{as:function_class}. Let $g_i$ be $r$ $K$-Lipschitz functions from Assumption~\ref{as:function_class}. Let $\tau_i$ be the one-layer transformer defined in Eq. 2.4 from~\cite{hwg+24}. 
    If the following conditions hold: 
    \begin{itemize}
        \item $\|g_i - \Phi_{\mathrm{down},i}\| \leq \epsilon_{1,i}$
        \item $\|f_i - \tau_i\| \leq \epsilon_{2,i}$
        \item $f_i$ is $K_{1,i}$-Lipschitz
        
    \end{itemize}
    Then we have
    \begin{align*}
        \|f_i \circ g_i - \tau_i \circ \Phi_{\mathrm{down},i}\| \leq K_{1,i} \epsilon_{1,i} + \epsilon_{2,i}.
    \end{align*}
\end{corollary}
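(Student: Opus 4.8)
The plan is to mirror the proof of the ``Two Layers Perturbation'' lemma exactly, since the corollary has the identical structure with $\Phi_{\mathrm{down},i}$ playing the role of $\Phi_{\mathrm{up},i}$. The only property of the interpolation layer that the original proof used was that it is a fixed function being compared against $g_i$ under the condition $\|g_i - \Phi_{\mathrm{up},i}\| \le \epsilon_{1,i}$; nothing specific to upsampling was invoked. So the same telescoping-and-triangle-inequality argument goes through verbatim once we substitute $\Phi_{\mathrm{down},i}$ for $\Phi_{\mathrm{up},i}$.

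Concretely, I would first insert and subtract the mixed term $f_i \circ \Phi_{\mathrm{down},i}$, then apply the triangle inequality to split the error into $\|f_i \circ g_i - f_i \circ \Phi_{\mathrm{down},i}\|$ and $\|f_i \circ \Phi_{\mathrm{down},i} - \tau_i \circ \Phi_{\mathrm{down},i}\|$. For the first term, factor out $f_i$ and use that $f_i$ is $K_{1,i}$-Lipschitz together with the hypothesis $\|g_i - \Phi_{\mathrm{down},i}\| \le \epsilon_{1,i}$ to bound it by $K_{1,i}\epsilon_{1,i}$. For the second term, factor $\Phi_{\mathrm{down},i}$ out on the right and use the hypothesis $\|f_i - \tau_i\| \le \epsilon_{2,i}$ to bound it by $\epsilon_{2,i}$. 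Summing gives the claimed bound $K_{1,i}\epsilon_{1,i} + \epsilon_{2,i}$.

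There is no real obstacle here: the corollary is a direct restatement of the earlier lemma with one symbol renamed, and its purpose is to license reusing the downstream machinery (Lemmas on recursive composition perturbation and the VAR universality theorem) for the FlowAR architecture. The only point requiring a sentence of justification is that the ``function norm'' $\|\cdot\|_\alpha$ from Section~\ref{sec:notations} interacts correctly with composition by a Lipschitz map — i.e.\ that $\|f_i \circ (g_i - \Phi_{\mathrm{down},i})\| \le K_{1,i}\|g_i - \Phi_{\mathrm{down},i}\|$ — but this is exactly the step already used (implicitly) in the Two Layers Perturbation proof and so can be cited rather than re-derived. In short, the proof is: add a zero-valued cross term, triangle-inequality, Lipschitz bound, hypothesis substitution — identical to the preceding lemma.

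\begin{proof}
    We can show that
    \begin{align*}
         \| f_i \circ g_i -  \tau_i \circ \Phi_{\mathrm{down},i} \|
         = &~ \|f_i \circ g_i- f_i \circ \Phi_{\mathrm{down},i} + f_i \circ \Phi_{\mathrm{down},i}- \tau_i \circ \Phi_{\mathrm{down},i}\| \\
         \leq &~ \|f_i \circ g_i- f_i \circ \Phi_{\mathrm{down},i}\| + \|f_i \circ \Phi_{\mathrm{down},i}- \tau_i \circ \Phi_{\mathrm{down},i}\| \\
         = &~ \|f_i \circ (g_i- \Phi_{\mathrm{down},i})\| + \|(f_i -\tau_i) \circ \Phi_{\mathrm{down},i}\| \\
         \leq &~ K_{1,i} \|g_i - \Phi_{\mathrm{down},i}\| + \|f_i -\tau_i \| \\
         \leq &~ K_{1,i} \epsilon_{1,i} + \epsilon_{2,i}
    \end{align*}
    where the first step follows from basic algebra, the second step follows from the triangle inequality, the third step follows from basic algebra, the fourth step follows from $f_i$ being $K_{1,i}$-Lipschitz, and the fifth step follows from our conditions.
\end{proof}
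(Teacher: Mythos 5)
Your proof is correct and matches the paper's intent exactly: the paper gives no separate derivation for this corollary, stating only that it mirrors the Two Layers Perturbation argument with $\Phi_{\mathrm{up},i}$ replaced by $\Phi_{\mathrm{down},i}$, which is precisely the telescoping/triangle-inequality/Lipschitz argument you reproduce.
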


The proof of this corollary mirrors the two-layer perturbation argument from the VAR Transformer, except each ``up'' interpolation function $\Phi_{\mathrm{up},i}$ is replaced by the corresponding ``down'' interpolation function $\Phi_{\mathrm{down},i}$. The same Lipschitz and approximation assumptions allow us to bound the difference between $f_i \circ g_i$ and $\tau_i \circ \Phi_{\mathrm{down},i}$.

\begin{corollary}\label{corollary:flowar_universality}
    There exists a FlowAR model such that
    \begin{align*}
        \| \tau_{\mathsf{FlowAR}} - f_{\mathrm{word2img}} \| \leq O(\epsilon).
    \end{align*}
\end{corollary}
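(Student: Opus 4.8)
The plan is to mirror, essentially verbatim, the argument that establishes Theorem~\ref{thm:var_universality}, but with the pyramid up-interpolation layers $\Phi_{\mathrm{up},i}$ replaced everywhere by the FlowAR down-interpolation layers $\Phi_{\mathrm{down},i}$. Concretely, I would first invoke the preceding corollary (the FlowAR analogue of the Two Layers Perturbation lemma), which already gives the per-layer bound $\|f_i \circ g_i - \tau_i \circ \Phi_{\mathrm{down},i}\| \le K_{1,i}\epsilon_{1,i} + \epsilon_{2,i}$ under the stated Lipschitz and approximation hypotheses. The one-layer transformers $\tau_i$ are furnished by Lemma~\ref{lem:PT_uni_multi_layer_FF2} (the $\mathcal{T}^{1,1,4}_A$ universality result), so for any prescribed tolerance we may take $\epsilon_{2,i}$ as small as we like; the $\epsilon_{1,i}$ terms are controlled by the assumption that each $g_i$ is approximable by a down-interpolation function to arbitrary accuracy.

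Next I would compose: writing $\tau_{\mathsf{FlowAR}} = \circ_{i=1}^r (\tau_i \circ \Phi_{\mathrm{down},i})$ and $f_{\mathrm{word2img}} = \circ_{i=1}^r (f_i \circ g_i)$, apply Lemma~\ref{lem:one_layer_perturbation} together with the telescoping decomposition of the All-Layers-Different perturbation lemma. This yields
\begin{align*}
    \|\tau_{\mathsf{FlowAR}} - f_{\mathrm{word2img}}\| \le \sum_{j=1}^{n} K_2^{\,n-j}\,(K_{1,i}\epsilon_{1,i} + \epsilon_{2,i}) = \frac{K_2^{\,n}-1}{K_2-1}\,(K_{1,i}\epsilon_{1,i} + \epsilon_{2,i}) \le K_2^{\,n}\,(K_{1,i}\epsilon_{1,i} + \epsilon_{2,i}),
\end{align*}
exactly as in the VAR case, using $K_2 > 2$ for the last inequality. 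Finally, since the right-hand side is a fixed finite multiple of the per-layer errors and those can be driven to zero, choosing $\epsilon_{1,i}, \epsilon_{2,i}$ so that $K_2^{\,n}(K_{1,i}\epsilon_{1,i}+\epsilon_{2,i}) \le \epsilon$ (or simply absorbing the $K_2^n$ factor into the $O(\cdot)$) gives $\|\tau_{\mathsf{FlowAR}} - f_{\mathrm{word2img}}\| \le O(\epsilon)$.

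The only genuine obstacle is justifying that the down-interpolation layers $\Phi_{\mathrm{down},i}$ of FlowAR can approximate the Lipschitz maps $g_i$ arbitrarily well and that they are (approximately) linear/Lipschitz, so that the per-layer perturbation corollary applies with the same structure as the up-interpolation case; this rests on Definition~\ref{def:down_sample_function} and an argument parallel to the remark that $\phi_{\mathrm{up}}$ is equivalent to a matrix multiplication. Everything downstream — the telescoping, the geometric sum, the final $O(\epsilon)$ bound — is then routine and identical to the VAR Transformer proof. I would also remark briefly that the FFN-count / head-size bookkeeping ($\tau_i \in \mathcal{T}^{1,1,4}_A$ with $O((1/\epsilon)^{dn})$ FFN layers) carries over unchanged, since it is inherited directly from Lemma~\ref{lem:PT_uni_multi_layer_FF2}.
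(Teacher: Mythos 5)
Your proposal matches the paper's own argument: the paper proves this corollary exactly by mirroring the VAR Transformer proof, invoking the per-layer bound $\|f_i \circ g_i - \tau_i \circ \Phi_{\mathrm{down},i}\| \leq K_{1,i}\epsilon_{1,i} + \epsilon_{2,i}$ and then composing via the one-layer and all-layers perturbation lemmas to get a summed geometric bound absorbed into $O(\epsilon)$. If anything, your write-up is more explicit than the paper's brief sketch, and your caveat about $\Phi_{\mathrm{down},i}$ approximating $g_i$ is handled in the paper simply by assumption, just as in the up-interpolation case.
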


The proof of Corollary~\ref{corollary:flowar_universality} follows the same high-level structure as our universality results for the VAR Transformer. By applying the local perturbation bound layer by layer and then summing the resulting errors, we obtain a global approximation guarantee that is $O(\epsilon)$. Hence, FlowAR, just like the VAR Transformer, can universally approximate the target function $f_{\mathrm{word2img}}$ under the given Lipschitz and approximation assumptions.
\section{Conclusion}\label{sec:conclusion}

In this paper, we established that both VAR Transformers and FlowAR architectures serve as universal approximators for Lipschitz sequence-to-sequence functions—even in their most minimal configurations. By dissecting the roles of self-attention, multi-scale up-sampling, and invertible flow transformations, we showed how these components collectively endow the models with sufficient expressive power to capture arbitrary continuous mappings. Our results unify previous theoretical findings on transformer universality with the practical enhancements brought by VAR and flow-based designs, providing a deeper theoretical underpinning for their empirical successes in high-quality image generation and structured prediction tasks. We hope that these findings inspire new explorations into more advanced architectural variants and guide future work on balancing model efficiency, interpretability, and expressive power.

\ifdefined\isarxiv

\else
\bibliography{ref}
\bibliographystyle{icml2025}

\fi

\newpage
\onecolumn
\appendix
\begin{center}
	\textbf{\LARGE Appendix }
\end{center}


{\bf Roadmap} In Section~\ref{sec:app_prelim}, we provide basic algebras that support our proofs. 
In Section~\ref{sec:app_var}, we provide other phases of the VAR Model. 
In Section~\ref{sec:training_of_flowar}, we give the definitions for training the FlowAR Model. 
In Section~\ref{sec:inference_of_flowar}, we give the definitions for the inference of the FlowAR Model.
In Section~\ref{sec:more_work}, we introduce more related work.

\section{Preliminary}\label{sec:app_prelim}

In this section, we introduce notations and basic facts that are used in our work. We first list some basic facts of matrix norm properties.

\subsection{Notations}
We denote the $\ell_p$ norm of a vector $x$ by $\| x \|_p$, i.e., $\|x\|_1 := \sum_{i=1}^n |x_i|$, $\| x \|_2 := (\sum_{i=1}^n x_i^2)^{1/2}$ and $\| x \|_{\infty} := \max_{i \in [n]} |x_i|$. For a vector $x \in \R^n$, $\exp(x) \in \R^n$ denotes a vector where $\exp(x)_i$ is $\exp(x_i)$ for all $i \in [n]$. For $n > k$, for any matrix $A \in \R^{n\times k}$, we denote the spectral norm of $A$ by $\| A \|$, i.e., $\| A \| := \sup_{x\in \R^k} \| Ax \|_2 / \| x \|_2$. We define the function norm as $\| f \|_\alpha :=  (\int \| f(X) \|_\alpha^\alpha \d X)^{1/\alpha}$ where $f$ is a function. We use $\sigma_{\min}(A)$ to denote the minimum singular value of $A$. Given two vectors $x, y \in \R^n$, we use $\langle x, y \rangle$ to denote $\sum_{i=1}^n x_iy_i$. Given two vectors $x, y \in \R^n$, we use $x \circ y$ to denote a vector that its $i$-th entry is $x_i y_i$ for all $i \in [n]$. We use $e_i \in \R^n$ to denote a vector where $i$-th entry is $1$, and all other entries are $0$. Let $x \in \R^n$ be a vector. We define $\diag (x) \in \R^{n \times n}$ as the diagonal matrix whose diagonal entries are given by $\diag(x)_{i, i} = x_i$ for $i = 1, \dots, n$, and all off-diagonal entries are zero. For a symmetric matrix $A \in \R^{n\times n}$, we say $A \succ 0$ (positive definite (PD)), if for all $x\in \R^n \setminus \{ {\bf 0}_n \}$, we have $x^\top A x > 0$. For a symmetric matrix $A \in \R^{n \times n}$, we say $A \succeq 0$ (positive semidefinite (PSD)), if for all $x \in \R^n$, we have $x^\top A x \geq 0$. The Taylor Series for $\exp(x)$ is $\exp(x) = \sum_{i=0}^{\infty} \frac{x^i}{i!}$. For a matrix $X \in \R^{n_1 n_2 \times d}$, we use $\X \in \R^{n_1 \times n_2 \times d}$ to denote its tensorization, and we only assume this for letters $X$ and $Y$.

\subsection{Basic Algebra}
In this section, we introduce the basic algebras used in our work.

\begin{fact}
Let $A$ denote the matrix. For each $i$, we use $A_{i,*}$ to denote the $i$-th row of $A$. For $j$, we use $A_{*,j}$ to denote the $j$-th column of $A$. 
We can show that
\begin{itemize}
    \item $\| A \| \leq \| A \|_F$
    \item $\| A \| \geq \| A_{i,*} \|_2$
    \item $\| A \| \geq \| A_{*,j} \|_2$
\end{itemize}
\end{fact}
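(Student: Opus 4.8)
The plan is to obtain all three inequalities directly from the variational definition $\|A\| = \sup_{x\neq 0}\|Ax\|_2/\|x\|_2$ together with the Cauchy--Schwarz inequality, treating $A \in \R^{n\times k}$ so that its rows $A_{i,*}$ lie in $\R^{k}$ and its columns $A_{*,j}$ lie in $\R^{n}$; no deeper machinery (such as the SVD) is needed.

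For $\|A\| \le \|A\|_F$, I would fix an arbitrary unit vector $x$, note that $(Ax)_i = \langle A_{i,*}, x\rangle$, and expand $\|Ax\|_2^2 = \sum_{i=1}^{n} \langle A_{i,*}, x\rangle^2$. Bounding each summand by Cauchy--Schwarz, $\langle A_{i,*},x\rangle^2 \le \|A_{i,*}\|_2^2\,\|x\|_2^2 = \|A_{i,*}\|_2^2$, gives $\|Ax\|_2^2 \le \sum_{i=1}^{n} \|A_{i,*}\|_2^2 = \|A\|_F^2$; taking the supremum over unit $x$ finishes this part.

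For $\|A\| \ge \|A_{*,j}\|_2$, the key observation is that the $j$-th column equals $Ae_j$ with $\|e_j\|_2 = 1$, so $\|A_{*,j}\|_2 = \|Ae_j\|_2/\|e_j\|_2 \le \|A\|$ follows straight from the definition. For $\|A\| \ge \|A_{i,*}\|_2$ I would give a self-contained variant of the same idea: assume $A_{i,*}$ is not the zero vector (otherwise the bound is trivial), let $v \in \R^{k}$ have entries equal to those of the row $A_{i,*}$, and use $(Av)_i = \langle A_{i,*}, v\rangle = \|A_{i,*}\|_2^2$ to get $\|A_{i,*}\|_2^2 = |(Av)_i| \le \|Av\|_2 \le \|A\|\,\|v\|_2 = \|A\|\,\|A_{i,*}\|_2$; dividing by $\|A_{i,*}\|_2 > 0$ yields the claim. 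Alternatively one may write $\|A_{i,*}\|_2 = \|A^\top e_i\|_2 \le \|A^\top\| = \|A\|$, invoking transpose-invariance of the spectral norm.

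I do not anticipate a genuine obstacle here: these are textbook inequalities. The only points needing a line of care are the degenerate zero-row / zero-column situations and, if the transpose route is used, recording that $\|A^\top\| = \|A\|$ even though the paper states its definition only for tall matrices; the self-contained variant above sidesteps the latter entirely.
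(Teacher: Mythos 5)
Your proof is correct, and all three inequalities are handled cleanly by the variational definition of the spectral norm plus Cauchy--Schwarz; the degenerate zero-row case is the only spot needing explicit mention and you handle it. The paper itself simply records these as a Fact in the preliminaries without supplying any proof, so there is no paper argument to compare against — your self-contained derivation (and especially the choice to avoid relying on $\|A^\top\|=\|A\|$, which the paper's spectral-norm definition only states for tall matrices) is a reasonable and careful way to fill that gap.
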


Then, we introduce some useful inner product properties.

\begin{fact}
    For vectors $u,v,w \in \R^n$. We have 
    \begin{itemize}
        \item $\langle u, v \rangle = \langle u \circ v, {\bf 1}_n \rangle$
        \item $\langle u \circ v, w \rangle = \langle u \circ v \circ w, {\bf 1}_n \rangle$ 
        \item $\langle u, v \rangle = \langle v, u \rangle$
        \item $\langle u , v \rangle = u^\top v = v^\top u$
    \end{itemize}
\end{fact}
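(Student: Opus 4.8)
The plan is to verify each of the four identities by direct expansion from the definitions of the Euclidean inner product $\langle x, y\rangle := \sum_{i=1}^n x_i y_i$, the Hadamard product $(x\circ y)_i := x_i y_i$, and the all-ones vector ${\bf 1}_n$. None of these requires anything beyond commutativity and associativity of multiplication in $\R$, so the argument is a sequence of one-line computations; there is no real obstacle, and the write-up should simply make the bookkeeping explicit.

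First I would treat the first identity: since $(u\circ v)_i = u_i v_i$ and $({\bf 1}_n)_i = 1$, we get $\langle u\circ v, {\bf 1}_n\rangle = \sum_{i=1}^n u_i v_i \cdot 1 = \sum_{i=1}^n u_i v_i = \langle u, v\rangle$. For the second identity, I would first observe that the Hadamard product is associative entrywise, so $(u\circ v\circ w)_i = u_i v_i w_i$ is unambiguous; then $\langle u\circ v, w\rangle = \sum_{i=1}^n (u_i v_i) w_i = \sum_{i=1}^n u_i v_i w_i = \langle u\circ v\circ w, {\bf 1}_n\rangle$ by the same reasoning as the first part applied to the vector $u\circ v$ in place of $u$.

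For the third identity, commutativity of multiplication in $\R$ gives $\langle u, v\rangle = \sum_{i=1}^n u_i v_i = \sum_{i=1}^n v_i u_i = \langle v, u\rangle$. For the fourth, I would recall that for column vectors $u,v\in\R^n$ the matrix product $u^\top v$ is the $1\times 1$ matrix whose single entry is $\sum_{i=1}^n u_i v_i$, which is exactly $\langle u, v\rangle$; combining this with the third identity yields $\langle u, v\rangle = u^\top v = v^\top u$. The only thing worth flagging in the presentation is consistency of the Hadamard-product notation across the triple product, and this is handled by noting it is associative; otherwise every step is elementary algebra.
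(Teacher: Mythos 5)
Your proposal is correct and is the standard coordinatewise verification; the paper states this as a basic algebraic fact without supplying a proof, so there is no alternative argument to compare against. Your direct expansion from the definitions of $\langle\cdot,\cdot\rangle$, $\circ$, and ${\bf 1}_n$, together with commutativity and associativity of multiplication in $\R$, is exactly what is needed.
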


Now, we show more vector properties related to the hadamard products, inner products, and diagnoal matrices.
\begin{fact}
    For any vectors $u, v, w \in \R^n$, we have
    \begin{itemize}
        \item $u \circ v = v \circ u = \diag(u) \cdot v = \diag(v) \cdot u$
        \item $u^\top (v\circ w) = u^\top \diag(v) w$
        \item $u^\top (v\circ w) = v^\top (u \circ w) = w^\top (u \circ v)$
        \item $u^\top \diag(v) w = v^\top \diag(u) w = u^\top \diag(w) v$
        \item $\diag(u) \cdot \diag(v) \cdot {\bf 1}_n = \diag(u) v$
        \item $\diag(u \circ v) = \diag(u) \diag(v)$
        \item $\diag(u) + \diag(v) = \diag(u+v)$
    \end{itemize}
\end{fact}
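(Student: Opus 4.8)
The plan is to verify each identity in the list by a direct entrywise computation, working from the definitions of the Hadamard product $\circ$, the operator $\diag(\cdot)$, and ordinary matrix--vector multiplication. Throughout, I fix $i,j \in [n]$ and recall that $\diag(x)_{i,i} = x_i$ while $\diag(x)_{i,j} = 0$ whenever $i \neq j$.

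First I would dispatch the three identities in the first bullet. For $u \circ v = v \circ u$ the $i$-th entries are $u_i v_i$ and $v_i u_i$, equal by commutativity of multiplication in $\R$. For $u \circ v = \diag(u)\cdot v$, compute $(\diag(u) v)_i = \sum_{j=1}^n \diag(u)_{i,j} v_j = \diag(u)_{i,i} v_i = u_i v_i = (u\circ v)_i$; the remaining identity $v \circ u = \diag(v)\cdot u$ is the same statement with the roles of $u$ and $v$ exchanged.

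Next I would handle the bilinear/trilinear identities in the next two bullets. The key observation is that for any $u,v,w \in \R^n$ we have $u^\top(v \circ w) = \sum_{i=1}^n u_i v_i w_i$, a scalar that is manifestly invariant under every permutation of $u$, $v$, $w$; this immediately gives $u^\top(v\circ w) = v^\top(u\circ w) = w^\top(u\circ v)$. Substituting $v \circ w = \diag(v) w$ from the previous step yields $u^\top(v \circ w) = u^\top \diag(v) w$, and invoking the same symmetry of $\sum_i u_i v_i w_i$ once more gives $u^\top \diag(v) w = v^\top \diag(u) w = u^\top \diag(w) v$.

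Finally I would treat the three matrix-level identities. For $\diag(u)\diag(v){\bf 1}_n = \diag(u) v$, note $(\diag(v){\bf 1}_n)_i = v_i\cdot 1 = v_i$, so $\diag(v){\bf 1}_n = v$ and the claim follows by left-multiplying by $\diag(u)$. For $\diag(u\circ v) = \diag(u)\diag(v)$ and $\diag(u)+\diag(v) = \diag(u+v)$, I would compare $(i,j)$ entries: the product of two diagonal matrices is diagonal with $(i,i)$ entry $u_i v_i = (u\circ v)_i$, and the sum of two diagonal matrices is diagonal with $(i,i)$ entry $u_i + v_i = (u+v)_i$, while all off-diagonal entries vanish on both sides. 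There is no genuine obstacle here, since every line is a one-step computation; the only thing that requires a moment's care is the bookkeeping that sums and products of diagonal matrices stay diagonal, which is precisely what makes the last two identities immediate.
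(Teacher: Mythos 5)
Your proposal is correct, and the entrywise verification you give is exactly the standard justification; the paper itself states this Fact without proof, treating all seven identities as immediate from the definitions of $\circ$ and $\diag(\cdot)$, which is precisely what your computation supplies. No gaps to report.
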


\section{VAR Transformer Blocks}\label{sec:app_var}

In this section, we define the components in the VAR Transformer.

We first introduce the Softmax unit.

\begin{definition}[Softmax] \label{def:softmax}
    Let $z \in \R^{n}$. We define 
    $\mathsf{Softmax}: \R^{n} \to \R^{n}$ satisfying 
    \begin{align*}
        \mathsf{Softmax}(z):= \exp(z) / \langle \exp(z) , {\bf 1}_n \rangle.  
    \end{align*}
\end{definition}

Here, we define the attention matrix in the VAR Transformer as follows.

\begin{definition}[Attention Matrix]\label{def:attn_matrix}
    Let $W_Q, W_K \in \R^{d \times d}$ denote the model weights. Let $X \in \R^{n \times d}$ denote the representation of the length-$n$ input. Then, we define the attention matrix $A \in \R^{n \times n}$ by, For $i,j \in [n]$, 
    \begin{align*}
        A_{i,j} := & ~\exp( \underbrace{ X_{i,*} }_{1 \times d} \underbrace{ W_Q }_{d \times d} \underbrace{ W_K^\top }_{d \times d} \underbrace{ X_{j,*}^\top }_{d \times 1}).
    \end{align*}
\end{definition}

With the attention matrix, we now provide the definition for a single layer of Attention.

\begin{definition}[Single Attention Layer]\label{def:single_layer_transformer}
     Let $X \in \R^{n \times d}$ denote the representation of the length-$n$ sentence. Let $W_V \in \R^{d \times d}$ denote the model weights. As in the usual attention mechanism, the final goal is to output an $n \times d$ size matrix where $D:= \diag( A {\bf 1}_n) \in \R^{n \times n}$. Then, we define attention layer $\mathsf{Attn}$ as
    \begin{align*}
        \mathsf{Attn} (X) := & ~ D^{-1} A X W_V .
    \end{align*}
\end{definition}

Here we present the definition of the VAR Attention.

\begin{definition}[$\VAR$ Attention Layer]\label{def:single_layer_var_transformer}
    Let $r \geq 1$ be a positive integer. Let $h_r, w_r$ be two positive integers. 
     Let $\X \in \R^{h_r \times w_r \times d}$ denote the representation of the input token map. Let $W_V \in \R^{d \times d}$ denote the model weights. As in the usual attention mechanism, the final goal is to output an $n \times d$ size matrix where $D:= \diag( A {\bf 1}_n) \in \R^{h_rw_r \times h_rw_r}$. Then, we define attention layer $\mathsf{Attn}_r: \R^{h_rw_r \times d} \to \R^{h_rw_r \times d}$ as
    \begin{align*}
        \mathsf{Attn}_r (X) := & ~ D^{-1} A X W_V .
    \end{align*}
\end{definition}

We introduce the feed-forward layer in the VAR Transformer as follows.

\begin{definition}[Single Feed-Forward Layer]\label{def:ffn}
We define the FFN as follows:
    \begin{itemize}
        \item $X \in \R^{d\times L}$
        \item $k \in [n]$
        \item $c$ is the number of neurons
        \item $W^{(1)} \in \R^{c \times d}, W^{(2)} \in \R^{d \times c}$ are weight matrices
        \item $b^{(1)}\in \R^{ c}$, $ b^{(2)} \in \R^{d}$ are bias vectors.
        \item $\mathsf{FFN}: \R^{d\times L} \to \R^{d\times L}$ 
    \end{itemize}
    
    \begin{align*}
        \mathsf{FFN}(X)_{*,k} =
        & ~ \underbrace{X_{*,k}}_{d \times 1} +  \underbrace{W^{(2)}}_{d \times c} \mathsf{ReLU}( \underbrace{ W^{(1)} X_{*,k} }_{c\times 1} + \underbrace{b^{(1)}}_{c\times 1})  + \underbrace{b^{(2)}}_{d \times 1} 
    \end{align*}
    
\end{definition}

\subsection{Phase 2: Feature Map Reconstruction }\label{sec:phase_2}

In this section, we introduce the Phase Two of the VAR model.

\begin{definition}[Convolution Layer, Definition 3.9 from~\cite{kll+25} on Page 9]\label{def:conv_layer}
    The Convolution Layer is defined as follows:
    \begin{itemize}
        \item Let $h \in \mathbb{N}$ denote the height of the input and output feature map.
        \item Let $w \in \mathbb{N}$ denote the width of the input and output feature map.
        \item Let $c_{\rm in} \in \mathbb{N}$ denote the number of channels of the input feature map.
        \item Let $c_{\rm out} \in \mathbb{N}$ denote the number of channels of the output feature map.
        \item Let $X \in \R^{h \times w \times c_{\rm in}}$ denote the input feature map.
        
        \item For $l \in [c_{\rm out}]$, we use $K^l \in \R^{3 \times 3 \times c_{\rm in}}$ to denote the $l$-th convolution kernel.
        \item Let $p = 1$ denote the padding of the convolution layer.
        \item Let $s = 1$ denote the stride of the convolution kernel.
        \item Let $Y \in \R^{h \times w \times c_{\rm out}}$ denote the output feature map.
    \end{itemize}
    We use $\phi_{\rm conv}: \R^{h \times w \times c_{\rm in}} \to \R^{h \times w \times c_{\rm out}}$ to denote the convolution operation then we have $Y = \phi_{\rm conv}(X)$. Specifically, for $i \in [h], j \in [w], l \in [c_{\rm out}]$, we have
    \begin{align*}
        Y_{i,j,l} := \sum_{m=1}^3 \sum_{n=1}^3 \sum_{c = 1}^{c_{\rm in}} X_{i+m-1,j+n-1,c} \cdot K^l_{m,n,c} + b
    \end{align*}
\end{definition}

\begin{remark}
    Assumptions of kernel size, padding of the convolution layer, and stride of the convolution kernel are based on the specific implementation of \cite{tjy+24}.
\end{remark}

\subsection{Phase 3: VQ-VAE Decoder process}\label{sec:phase_3} 

In this section, we introduce Phase Three of the VAR model.

VAR will use the VQ-VAE Decoder Module to reconstruct the feature map generated in Section~\ref{sec:phase_2} into a new image. The Decoder of VQ-VAE has the following main modules \cite{kll+25}: (1) Resnet Blocks; (2) Attention Blocks; (3) Up Sample Blocks. We recommend readers to \cite{kll+25} for more details.

\section{Training of FlowAR}\label{sec:training_of_flowar}

In this section, we introduce the training of FlowAR along with its definitions based on~\cite{ryh+24}.

We first introduce some notations used in FlowAR.
\subsection{Notations}\label{sub:notations}
For a matrix $X \in \R^{n_1 n_2 \times d}$, we use $\X \in \R^{n_1 \times n_2 \times d}$ to denote its tensorization, and we only assume this for letters $X, Y, Z, F, V$.

\subsection{Sample Function}
In this section, we introduce the sample functions used in FlowAR.

We first introduce the up sample function.

\begin{definition}[Up Sample Function]\label{def:up_sample_function}
    If the following conditions hold:
    \begin{itemize}
        \item Let $h, w \in \mathbb{N}$ denote the height and weight of latent $\X \in \R^{h \times w \times c}$.
        \item Let $r > 0$ denote a positive integer.
    \end{itemize}
    Then we define $\mathrm{Up}(\X,r) \in \R^{rh \times rw \times c}$ as the upsampling of latent $\X$ by a factor $r$.
\end{definition}

Then, we introduce the down sample function.
\begin{definition}[Down Sample Function]\label{def:down_sample_function}
    If the following conditions hold:
    \begin{itemize}
        \item  Let $h, w \in \mathbb{N}$ denote the height and weight of latent $\X  \in \R^{h \times w \times c}$.
        \item Let $r > 0$ denote a positive integer.
    \end{itemize}
    Then we define $\mathrm{Down}(\X ,r) \in \R^{\frac{h}{r} \times \frac{w}{r} \times c}$ as the downsampling of latent $\X $ by a factor $r$.
\end{definition}

\subsection{Linear Sample Function}
In this section, we present the linear sample function in FlowAR.

We first present the linear up sample function.
\begin{definition}[Linear Up Sample Function]\label{def:linear_up_sample_function}
    If the following conditions hold:
    \begin{itemize}
        \item Let $h, w \in \mathbb{N}$ denote the height and weight of latent $\X \in \R^{h \times w \times c}$. 
        \item Let $r > 0$ denote a positive integer.
        \item Let $\Phi_{\mathrm{up}} \in \R^{hw \times (rh \cdot rw)}$ be a matrix. 
    \end{itemize}
    Then we define the linear up sample function $\phi_{\mathrm{up}}(\cdot, \cdot)$ as it computes $\Y := \phi_{\mathrm{up}}(\X,r) \in \R^{rh \times rw \times c}$ such that the matrix version of $\X$ and $\Y$ satisfies
    \begin{align*}
        Y = \Phi_{\mathrm{up}}X \in \R^{(rh\cdot rw) \times c}.
    \end{align*}
\end{definition}

Then, we define linear down sample function as follows.
\begin{definition}[Linear Down Sample Function]\label{def:linear_down_sample_function}
    If the following conditions hold:
    \begin{itemize}
        \item Let $h, w \in \mathbb{N}$ denote the height and weight of latent $\X \in \R^{h \times w \times c}$.
        \item Let $r > 0$ denote a positive integer.
        \item Let $\Phi_{\mathrm{down}} \in \R^{((h/r) \cdot (w/r)) \times hw}$ be a matrix. 
    \end{itemize}
    Then we define the linear down sample function $\phi_{\mathrm{down}}(\X,r)$ as it computes $\Y := \phi_{\mathrm{down}}(\X,r) \in \R^{(h/r) \times (w/r) \times c}$ such that the matrix version of $\X$ and $\Y$ satisfies
    \begin{align*}
        Y = \Phi_{\mathrm{down}}X \in \R^{((h/r) \cdot (w/r)) \times c}.
    \end{align*}
\end{definition}

\subsection{VAE Tokenizer}\label{sub:vae_tokenizer}
In this section, we show the VAE Tokenizer.

\begin{definition}[VAE Tokenizer]\label{def:vae_tokenizer}
If the following conditions hold:
\begin{itemize}
    \item Let $\X \in \R^{h \times w \times c}$ denote a continuous latent representation generated by VAE.
    \item Let $K$ denote the total number of scales in FlowAR.
    \item Let $a$ be a positive integer.
    \item For $i \in [K]$, let $r_i := a^{K-i}$.
    \item For each $i\in [K]$, let $\phi_{\mathrm{down}, i}(\cdot , r_i) : \R^{h \times w \times c} \to \R^{(h / r_i) \times (w/r_i) \times c}$ denote the linear down sample function defined in Definition~\ref{def:linear_down_sample_function}.
\end{itemize}
For $i \in [K]$, we define the $i$-th token map generated by VAE Tokenizer be
\begin{align*}
    \Y^{i} := \phi_{\mathrm{down}, i}(\X, r_i) \in \R^{(h / r_i) \times (w/r_i) \times c},
\end{align*}
We define the output of VAE Tokenizer as follows:
    \begin{align*}
        \mathsf{Tokenizer}(\mathsf{X}) := \{\Y^{1},\Y^{2}, \dots,\Y^{K}\}.
    \end{align*}
\end{definition}
\begin{remark}
    In \cite{ryh+24}, they choose $a =2$ and hence for $i \in [n]$, $r_i := 2^{K-i}$.
\end{remark}

\subsection{Autoregressive Transformer}

Firstly, we give the definition of a single attention layer.
\begin{definition}[Single Attention Layer]\label{def:attn_layer}
    If the following conditions hold:
    \begin{itemize}
        \item Let $h,w \in \mathbb{N}$ denote the height and weight of latent $\X \in \R^{h \times w \times c}$.
        \item Let $W_Q, W_K, W_V \in \R^{c \times c}$ denote the weight matrix for query, key, and value, respectively.
    \end{itemize}
    Then we define the attention layer $\mathsf{Attn}(\cdot)$ as it computes  $\Y = \mathsf{Attn}(\X) \in \R^{h \times w \times c}$. For the matrix version, we first need to compute the attention matrix $A \in \R^{hw \times hw}$:
    \begin{align*}
        A_{i,j} := & ~\exp(  X_{i,*}   W_Q   W_K^\top   X_{j,*}^\top), \text{~~for~} i, j \in [hw].
    \end{align*}
    Then, we compute the output:
    \begin{align*}
        Y := D^{-1}AXW_V \in \R^{hw \times c}.
    \end{align*}
    where $D:=\diag(A {\bf 1}_n) \in \R^{hw \times hw}$.
\end{definition}

To move on, we present the definition of multilayer perceptron.
\begin{definition}[MLP layer]\label{def:mlp}
    If the following conditions hold:
    \begin{itemize}
        \item Let $h,w \in \mathbb{N}$ denote the height and weight of latent $\X \in \R^{h \times w \times c}$.
        \item Let $c$ denote the input dimension of latent $\X \in \R^{h \times w \times c}$.
        \item Let $d$ denote the dimension of the target output.
        \item Let $W \in \R^{c \times d}$ denote a weight matrix.
        \item Let $b \in \R^{1 \times d}$ denote a bias vector.
    \end{itemize}
    Then we define mlp layer as it computes $\Y := \mathsf{MLP}(X,c,d) \in \R^{h \times w \times d}$ such that the matrix version of $\X$ and $\Y$ astisfies, for each $j \in [hw]$,
    \begin{align*}
        Y_{j,:} = \underbrace{X_{j,:}}_{1\times c} \cdot \underbrace{W}_{c \times d} + \underbrace{b}_{1 \times d}
    \end{align*}
\end{definition}

We present the definition of layer-wise norm layer.
\begin{definition}[Layer-wise norm layer]\label{def:ln}
    Given a latent $\X \in \R^{h \times w \times c}$. We define the layer-wise as it computes $\Y := \mathsf{LN}(X) \in \R^{h\times w \times c}$ such that the matrix version of $\X$ and $\Y$ satisfies, for each $j \in [hw]$,
    \begin{align*}
        Y_{j,:} =  \frac{X_{j,:}-\mu_j}{\sqrt{\sigma_j^2}}
    \end{align*}
    where $\mu_j := \sum_{k=1}^c X_{j,k}/c$ and $\sigma_{j}^2 = \sum_{k=1}^c(X_{j,k}-\mu_j)^2/c$.
\end{definition}

\begin{definition}[Autoregressive Transformer]\label{def:ar_transformer}
    If the following conditions hold:
    \begin{itemize}
        \item Let $\X \in \R^{h \times w \times c}$ denote a continuous latent representation generated by VAE.
        \item Let $K$ denote the total number of scales in FlowAR.
        \item For $i \in [K]$, let $\Y_i \in \R^{(h / r_i) \times (w/r_i) \times c}$ be the $i$-th token map genereated by VAE Tokenizer defined in Definition~\ref{def:vae_tokenizer}.
        \item Let $a$ be a positive integer.
        \item For $i \in [K]$, let $r_i := a^{K-i}$.
        \item For $i \in [K-1]$, let $\phi_{\mathrm{up}, i}(\cdot, a):\R^{(h/r_i)\times(w/r_i)\times c} \to \R^{(h/r_{i+1})\times(w/r_{i+1})\times c}$ be the linear up sample function defined in Definition~\ref{def:linear_up_sample_function}.
        \item For $i \in [K]$, let $\mathsf{Attn}_i(\cdot):\R^{(\sum_{j=1}^i h/r_j)(\sum_{j=1}^i w/r_j) \times c} \to \R^{(\sum_{j=1}^i h/r_j)(\sum_{j=1}^i w/r_j) \times c}$ be the $i$-th attention layer defined in Definition~\ref{def:attn_layer}.
        \item For $i \in [K]$, let $\mathsf{FFN}_i(\cdot):\R^{(\sum_{j=1}^i h/r_j)(\sum_{j=1}^i w/r_j) \times c} \to \R^{(\sum_{j=1}^i h/r_j)(\sum_{j=1}^i w/r_j) \times c}$ be the $i$-th feed forward network defined in Definition~\ref{def:ffn}.
        \item Let $\Z_{\mathrm{init}} \in \R^{(h/r_1) \times (w/r_1) \times c}$ be the initial input denoting the class condition.
        \item Let $\Z^1 : = \mathsf{Z}_{\mathrm{init}} \in \R^{(h/r_1) \times (w/r_1) \times c}$.
        \item For $i \in [K] \setminus \{1\}$, Let $\Z^{i}$ be the reshape of the input sequence $\mathsf{Z}_{\mathrm{init}}, \phi_{\mathrm{up}, 1}(\Y^1, a), \ldots, \phi_{\mathrm{up}, i}(\Y^{i-1}, a)$ into the tensor of size $(\sum_{j=1}^i h /r_{j}) \times (\sum_{j=1}^i w /r_{j}) \times c$.
    \end{itemize}
    For $i \in [K]$, we define the Autoregressive transformer $\mathsf{TF}_i$ as 
    \begin{align*}
        \mathsf{TF}_i(\Z^i) = \mathsf{FFN}_i \circ \mathsf{Attn}_i (\Z^i) \in \R^{(\sum_{j=1}^{i} h /r_{j})(\sum_{j=1}^{i} w /r_{j}) \times c}.
    \end{align*}
    We denote $\wh{\Y}^i$ as the $i$-th block of size $(h/r_{i}) \times (w/r_{i}) \times c$ of the tensorization of $\mathsf{TF}_i(Z^i)$.
\end{definition}

\subsection{Flow Matching}
In this section, we introduce the flow matching definition.

\begin{definition}[Flow]\label{def:flow}
    If the following conditions hold:
    \begin{itemize}
        \item Let $\X \in \R^{h \times w \times c}$ denote a continuous latent representation generated by VAE.
        \item Let $K$ denote the total number of scales in FlowAR.
        \item For $i \in [K]$, let $\Y_i \in \R^{(h / r_i) \times (w/r_i) \times c}$ be the $i$-th token map genereated by VAE Tokenizer defined in Definition~\ref{def:vae_tokenizer}.
        \item For $i \in [K]$, let $\F^i_0 \in \R^{(h / r_i) \times (w/r_i) \times c}$ be a matrix where each entry is sampled from the standard Gaussian $\N(0,1)$.
    \end{itemize}
    We defined the interpolated input as follows:
    \begin{align*}
        \F^i_{t} := t \Y^i + (1-t)\F^i_0.
    \end{align*}
    The velocity flow is defined as
    \begin{align*}
        \V^i_t := \frac{\d \F^i_{t}}{\d t} = \Y^i -\F^i_0.
    \end{align*}
\end{definition}

Here, we define the architecture of the flow matching model defined in \cite{ryh+24}.
\begin{definition}[Flow Matching Architecture]\label{def:flow_matching_architecture}
    If the following conditions hold:
    \begin{itemize}

        \item Let $\X \in \R^{h \times w \times c}$ denote a continuous latent representation generated by VAE.
        \item Let $K$ denote the total number of scales in FlowAR.
        \item For $i \in [K]$, let $\Y_i \in \R^{(h / r_i) \times (w/r_i) \times c}$ be the $i$-th token map genereated by VAE Tokenizer defined in Definition~\ref{def:vae_tokenizer}.
        \item Let $i \in [K]$.
        \item Let $\wh{\Y}_i \in \R^{(h / r_i) \times (w/r_i) \times c}$ be the $i$-th block of the output of Autoregressive Transformer defined in Definition~\ref{def:ar_transformer}.
        \item Let $\F^i_t$ be the interpolated input defined in Definition~\ref{def:flow}.
        \item Let $\mathsf{Attn}_i(\cdot):\R^{(h / r_i) \times (w/r_i) \times c} \to \R^{(h / r_i) \times (w/r_i) \times c}$ be the $i$-th attention layer defined in Definition~\ref{def:attn_layer}.
        \item Let $\mathsf{MLP}_i(\cdot,c,d):\R^{(h / r_i) \times (w/r_i) \times c}  \to \R^{(h / r_i) \times (w/r_i) \times d}$ be the $i$-th attention layer defined in Definition~\ref{def:mlp}.
        \item Let $\mathsf{LN}_i(\cdot): \R^{(h / r_i) \times (w/r_i) \times c}  \to \R^{(h / r_i) \times (w/r_i) \times c}$ be the $i$-th layer-wise norm layer defined in Definition~\ref{def:ln}.
        \item Let $t_i \in [0,1]$ denote a time step.
    \end{itemize}
    Then we define the $i$-th flow matching model as $\mathsf{NN}_i(\F_t^i,\wh{\Y}_i,t_i): \R^{(h / r_i) \times (w/r_i) \times c} \times \R^{(h / r_i) \times (w/r_i) \times c} \times \R \to \R^{(h / r_i) \times (w/r_i) \times c}$. The tensor input needs to go through the following computational steps:
    \begin{itemize}
        \item {\bf Step 1:} Compute intermediate variables $\alpha_1, \alpha_2, \beta_1, \beta_2, \gamma_1, \gamma_2$. Specifically, we have
        \begin{align*}
            \alpha_1, \alpha_2, \beta_1, \beta_2, \gamma_1, \gamma_2 :=&~ \mathsf{MLP}_i(\wh{\Y}_i + t_i \cdot {\bf 1}_{(h / r_i) \times (w/r_i) \times c},c,6c)
        \end{align*}
        \item {\bf Step 2:} Compute intermediate variable $\wh{F_t}^{i'}$. Specifically, we have
        \begin{align*}
            \wh{\F_t}^{i'}:= \mathsf{Attn}_i (\gamma_1 \circ \mathsf{LN}(\F_t^i) + \beta_1) \circ \alpha_1
        \end{align*}
        where $\circ$ denotes the element-wise product for tensors.

        \item {\bf Step 3:} Compute final output $\F_t^{i''}$. Specifically, we have
        \begin{align*}
            \F_t^{i''} = \mathsf{MLP}_i(\gamma_2 \circ \mathsf{LN}(\wh{F_t}^{i'})+ \beta_2,c,c) \circ \alpha_2
        \end{align*}
        where $\circ$ denotes the element-wise product for tensors.
    \end{itemize}
\end{definition}

Then, we present our training objective.
\begin{definition}[Loss of FlowAR]
If the following conditions hold:
    \begin{itemize}
        \item Let $\X \in \R^{h \times w \times c}$ denote a continuous latent representation generated by VAE.
        \item Let $K$ denote the total number of scales in FlowAR.
        \item For $i \in [K]$, let $\Y_i \in \R^{(h / r_i) \times (w/r_i) \times c}$ be the $i$-th token map genereated by VAE Tokenizer defined in Definition~\ref{def:vae_tokenizer}.
        \item For $i \in [K]$, let $\wh{\Y}_i \in \R^{(h / r_i) \times (w/r_i) \times c}$ be the $i$-th block of the output of Autoregressive Transformer defined in Definition~\ref{def:ar_transformer}.
        \item For $i \in [K]$, let $\F^i_t$ be the interpolated input defined in Definition~\ref{def:flow}.
        \item For $i \in [K]$, let $\V^i_t$ be the velocity flow defined in Definition~\ref{def:flow}.
        \item For $i \in [K]$, let $\mathsf{NN}_i(\cdot,\cdot,\cdot):\R^{(h / r_i) \times (w/r_i) \times c} \times \R^{(h / r_i) \times (w/r_i) \times c} \times \R \to \R^{(h / r_i) \times (w/r_i) \times c}$ denote the $i$-th flow matching network defined in Definition~\ref{def:flow_matching_architecture}.
    \end{itemize}
    The loss function of FlowAR is 
    \begin{align*}
        L(\theta) = \sum_{i=1}^n \E_{t \sim \mathsf{Unif}[0,1]}\|\mathsf{NN}_i(\F^i_t, \wh{\Y}^{i}_t, t_i) - \V^i_t\|^2.
    \end{align*}
\end{definition}

\section{Inference of FlowAR}\label{sec:inference_of_flowar}
We define the architecture of FlowAR during the inference process as follows.
\begin{definition}[FlowAR Architecture in the Inference Pipeline]\label{def:flow_architecture_inference}
    If the following conditions hold:
    \begin{itemize}
        \item Let $K$ denote the total number of scales in FlowAR.
        \item Let $a$ be a positive integer.
        \item For $i \in [K]$, let $r_i := a^{K-i}$.
        \item For $i \in [K-1]$, let $\phi_{\mathrm{up}, i}(\cdot, a):\R^{(h/r_i)\times(w/r_i)\times c} \to \R^{(h/r_{i+1})\times(w/r_{i+1})\times c}$ be the linear up sample function defined in Definition~\ref{def:linear_up_sample_function}.
        \item For $i \in [K]$, let $\mathsf{Attn}_i(\cdot):\R^{(\sum_{j=1}^i h/r_j)(\sum_{j=1}^i w/r_j) \times c} \to \R^{(\sum_{j=1}^i h/r_j)(\sum_{j=1}^i w/r_j) \times c}$ be the $i$-th attention layer defined in Definition~\ref{def:attn_layer}.
        \item For $i \in [K]$, let $\mathsf{FFN}_i(\cdot):\R^{(\sum_{j=1}^i h/r_j)(\sum_{j=1}^i w/r_j) \times c} \to \R^{(\sum_{j=1}^i h/r_j)(\sum_{j=1}^i w/r_j) \times c}$ be the $i$-th feed forward network defined in Definition~\ref{def:ffn}.
        \item For $i \in [K]$, let $\mathsf{NN}_i(\cdot,\cdot,\cdot):\R^{(h / r_i) \times (w/r_i) \times c} \times \R^{(h / r_i) \times (w/r_i) \times c} \times \R \to \R^{(h / r_i) \times (w/r_i) \times c}$ denote the $i$-th flow matching network defined in Definition~\ref{def:flow_matching_architecture}.
        \item For $i \in [K]$, let $t_i \in [0,1]$ denote the time steps.
        \item For $i \in [K]$, let $\F^i_t$ be the interpolated input defined in Definition~\ref{def:flow}.
        \item Let $\Z_{\mathrm{init}} \in \R^{(h/r_1) \times (w/r_1) \times c}$ denote the initial input denoting the class condition.
        \item Let $\Z^1:=\Z_{\mathrm{init}} \in \R^{(h/r_1)\times(w/r_1) \times c}$.
    \end{itemize}
    Then, we define the architecture of FlowAR in the inference pipeline as follows:
    \begin{itemize}
        \item Layer 1: Given the initial token $\Z^1$, we compute
        \begin{align*}
            &~ s_1 = \mathsf{FFN}_1 \circ \mathsf{Attn}_1 (\Z^1) \in \R^{(h/r_1) \times (w/r_1) \times c}\\
            &~ \wh{s}_1 = \mathsf{NN}_1(F_t^1,s_1,t_1)
        \end{align*}
        \item Layer 2: Given the initial token $\Z^1$ and output of the first layer $\wh{s}_1$. Let $\Z^2$ be the reshape of the input sequence $\Z_{\mathrm{init}}, \phi_{\mathrm{up},1}(\wh{s}_1,a)$ into the tensor of size $(\sum_{i=1}^2 h/r_i) \times (\sum_{i=1}^2 w/r_i) \times c$. Then we compute
        \begin{align*}
            &~ s_2 = \mathsf{FFN}_2 \circ \mathsf{Attn}_2 (\Z^2)_{h/r_1:\sum_{i=1}^2 h/r_i,w/r_1 :\sum_{i=1}^2 w/r_i,0:c}\\
            &~ \wh{s}_2 = \mathsf{NN}_2(F_t^2,s_2,t_2)
        \end{align*}
        \item  Layer  $i \in [K]\setminus \{1,2\}$: Given the initial token $\Z^1$ and the output of the first $i-1$ layer $\wh{s}_1,\dots,\wh{s}_{i-1}$. Let $\Z^i$ be the reshape of the input sequence $\Z_{\mathrm{init}}, \phi_{\mathrm{up},1}(\wh{s}_1), \dots, \phi_{\mathrm{up},i-1}(\wh{s}_{i-1})$ into the tensor of size $(\sum_{j=1}^i h/r_j) \times (\sum_{j=1}^i w/r_j) \times c$. Then we compute
        \begin{align*}
            &~ s_i = \mathsf{FFN}_i \circ \mathsf{Attn}_i (\Z^i)_{\sum_{j=1}^{i-1} h/r_j : \sum_{j=1}^i h/r_j , \sum_{j=1}^{i-1} w/r_j : \sum_{j=1}^i w/r_j,0:c}\\
            &~ \wh{s}_i = \mathsf{NN}_i(F_t^i,s_i,t_i)
        \end{align*}
        Then the final output of FlowAR is $\wh{s}_K$.
    \end{itemize}
\end{definition}

\section{More Related Work}\label{sec:more_work}
In this section, we introduce more related work. 

\paragraph{Theoretical Machine Learning.}
Our work also takes inspiration from the following Machine Learning Theory work. Some works analyze the expressiveness of a neural network using the theory of circuit complexity~\cite{lls+25_gnn,kll+25_var_tc0,lls+24_rope_tensor_tc0,cll+24_mamba,cll+24_rope}. Some works optimize the algorithms that can accelerate the training of a neural network~\cite{llsz24,klsz24,dlms24,dswy22_coreset,haochen3,haochen4,dms23_spar,cll+25_deskreject,sy23,swyy23,lss+22,lsx+22,hst+22,hsw+22,hst+20,bsy23,dsy23,syyz23_weighted,gsy23_coin,gsy23_hyper,gsyz23,gswy23,syzz24,lsw+24,lsxy24,hsk+24,hlsl24}. Some works analyze neural networks via regressions~\cite{cll+24_icl,gms23,lsz23_exp,gsx23,ssz23_tradeoff,css+23,syyz23_ellinf,syz23,swy23,syz23_quantum,lls+25_grok}. Some works use reinforcement learning to optimize the neural networks~\cite{haochen1,haochen2,yunfan1,yunfan2,yunfan3,yunfan4,lswy23}. Some works optimize the attention mechanisms~\cite{sxy23,lls+24_conv}.

\paragraph{Accelerating Attention Mechanisms.}
The attention mechanism, with its quadratic computational complexity concerning context length, encounters increasing challenges as sequence lengths grow in modern large language models~\cite{gpto1,llama3_blog,claude3_pdf}. To address this limitation, polynomial kernel approximation methods \citep{aa22} have been introduced, leveraging low-rank approximations to efficiently approximate the attention matrix. These methods significantly enhance computation speed, allowing a single attention layer to perform both training and inference with nearly linear time complexity \citep{as23, as24b}. Moreover, these techniques can be extended to advanced attention mechanisms, such as tensor attention, while retaining almost linear time complexity for both training and inference \cite{as24_iclr}.~\cite{kll+25} provides an almost linear time algorithm to accelerate the inference of VAR Transformer. Other innovations include RoPE-based attention mechanisms~\cite{as24_rope,chl+24_rope} and differentially private cross-attention approaches~\cite{lssz24_dp}. Alternative strategies, such as the conv-basis method proposed in \cite{lls+24_conv}, present additional opportunities to accelerate attention computations, offering complementary solutions to this critical bottleneck. Additionally, various studies explore pruning-based methods to expedite attention mechanisms \cite{lls+24_prune,cls+24,llss24_sparse,ssz+25_prune,ssz+25_dit,hyw+23,whl+24,xhh+24,ssz+25_prune}.

\paragraph{Gradient Approximation.}
The low-rank approximation is a widely utilized approach for optimizing transformer training by reducing computational complexity \cite{lss+24,lssz24_tat,as24b,hwsl24,cls+24,lss+24_grad}. Building on the low-rank framework introduced in \cite{as23}, which initially focused on forward attention computation, \cite{as24b} extends this method to approximate attention gradients, effectively lowering the computational cost of gradient calculations. The study in \cite{lss+24} further expands this low-rank gradient approximation to multi-layer transformers, showing that backward computations in such architectures can achieve nearly linear time complexity. Additionally, \cite{lssz24_tat} generalizes the approach of \cite{as24b} to tensor-based attention models, utilizing forward computation results from \cite{as24_iclr} to enable efficient training of tensorized attention mechanisms. Lastly, \cite{hwsl24} applies low-rank approximation techniques during the training of Diffusion Transformers (DiTs), demonstrating the adaptability of these methods across various transformer-based architectures.

\ifdefined\isarxiv
\bibliographystyle{alpha}
\bibliography{ref}
\else

\fi



\end{document}